\definecolor{mplblue}{HTML}{1f77b4}
\definecolor{mplred}{HTML}{d62728}
\setlist[enumerate]{itemsep=1pt,topsep=0pt}
\setlist[itemize]{itemsep=1pt,topsep=0pt}
\newtheoremstyle{ssf}
  {\topsep}   
  {\topsep}   
  {\itshape}  
  {0pt}       
  {\bfseries\sffamily} 
  {.}         
  {5pt plus 1pt minus 1pt} 
  {{\thmname{#1 }}{\thmnumber{#2}}{\thmnote{ (#3)}}}
\newtheoremstyle{thmstyle}
  {\topsep}   
  {\topsep}   
  {\itshape}  
  {0pt}       
  {\bfseries\sffamily} 
  {.}         
  {5pt plus 1pt minus 1pt} 
  {}
  \newtheoremstyle{defstyle}
  {\topsep}   
  {\topsep}   
  {}  
  {0pt}       
  {\bfseries\sffamily} 
  {.}         
  {5pt plus 1pt minus 1pt} 
  {}
\theoremstyle{thmstyle}
\newtheorem{thm}{Theorem}
\newtheorem*{thm*}{Theorem}
\newtheorem{lem}[thm]{Lemma}
\newtheorem*{lem*}{Lemma}
\newtheorem{cor}{Corollary}[thm]
\newtheorem{prop}{Proposition}
\theoremstyle{defstyle}
\newtheorem{assump}{Assumption}
\newtheorem{defi}{Definition}
\newtheorem{rem}{Remark}
\newtheorem{example}{Example}
\newcommand*{\T}{%
  {\mathpalette\@T{}}%
}
\newcommand*{\@T}[2]{%
  \raisebox{\depth}{$\m@th#1\intercal$}%
}
\def\1{\bm{1}}
\def\va{{\bm{a}}}
\def\vb{{\bm{b}}}
\def\vc{{\bm{c}}}
\def\vd{{\bm{d}}}
\def\vx{\bm{x}}
\def\vy{{\bm{y}}}
\def\vz{{\bm{z}}}
\def\mD{{\bm{D}}}
\DeclareMathAlphabet{\mathsfit}{\encodingdefault}{\sfdefault}{m}{sl}
\SetMathAlphabet{\mathsfit}{bold}{\encodingdefault}{\sfdefault}{bx}{n}
\def\sR{{\mathbb{R}}}
\newcommand{\R}{\mathbb{R}}
\DeclareMathOperator*{\argmin}{arg\,min}
\DeclareMathOperator{\sech}{sech}
\theoremstyle{plain}
\title{On the stability of gradient descent with second order dynamics for time-varying cost functions}
\author[1,2,3,4,*]{\bf Travis~E.~Gibson}
\author[2,5]{\bf Sawal~Acharya} 
\author[1]{\bf Anjali~Parashar} 
\author[1]{\bf Joseph~E.~Gaudio} 
\author[1]{\bf Anuradha M.~Annaswamy} 
\affil[1]{Massachusetts Institute of Technology} 
\affil[2]{Brigham and Women's Hospital}
\affil[3]{Harvard Medical School}  
\affil[4]{Broad Institute of MIT and Harvard} 
\affil[5]{Stanford University} 
\affil[*]{Correpsondence: tegibson@bwh.harvard.edu}
\begin{document}

\maketitle

\begin{abstract}
Gradient based optimization algorithms deployed in Machine Learning (ML) applications are often analyzed and compared by their convergence rates or regret bounds. While these rates and bounds convey valuable information they don't always directly translate to stability guarantees. Stability and similar concepts, like robustness, will become ever more important as we move towards deploying models in real-time and safety critical systems. In this work we build upon the results in \citeauthor{gaudio2021accelerated} \citeyear{gaudio2021accelerated} and \citeauthor{moreu_stable_convex} \citeyear{moreu_stable_convex} for gradient descent with second order dynamics when applied to explicitly time varying cost functions and provide more general stability guarantees. These more general results can aid in the design and certification of these optimization schemes so as to help ensure safe and reliable deployment for real-time learning applications. We also hope that the techniques provided here will stimulate and cross-fertilize the analysis that occurs on the same algorithms from the online learning and stochastic optimization communities.
\end{abstract}

\section{Introduction}\label{intro}
For data-driven technologies to be reliably deployed in the real world they will inevitably need robustness guarantees. Those guarantees, while practical, will also aid in the formal certification of these methods. Robustness guarantees will be even more important for Safety Critical and Real-Time Systems (SCS \& RTS). Part of these robustness guarantees will assuredly be related to cross-validation. There are also many scenarios where we envision the optimization scheme itself would also need to have robustness guarantees. Here, specifically, we are interested in the robustness guarantees of Gradient Descent (GD) based optimization schemes. Consider for instance systems that learn in real-time while they are deployed, like a self-driving car that catalogues new events and simultaneously labels and retrains the model (semi-supervised) on the fly, or an aircraft control algorithm that updates the feedback control gains in flight. In both these instances one does not have the luxury of tuning learning rates offline or performing episodic learning and control, but needs to be assured that the model training process itself is stable in real-time. By stability here we are referring to the definitions from dynamical systems and control theory \citep{massera1956contributions}, not the algorithmic stability definition from \citet{bousquet2002stability}. 

Within control this field of study falls under the moniker of Adaptive Control (AC).
\footnote{Here we are referring to AC that is simultaneously updating parameters while controlling, like Model Reference Adaptive Control (MRAC) \citep{narendra2012stable,ioannou2012robust}, not the area of AC that has the learning phase and the deployment phase separate, i.e. episodic learning \citep{hazan_control_book,review_ac_online}.} The closest area within ML where models are trained in real-time is fairly nascent but aptly named Real-Time Machine Learning (RTML) \citep{DARPA_rtml, NSF_rtml}.\footnote{Continuous Learning (CL) and Lifelong Learning (LL) \citep{chen2018lifelong,parisi_lifelong} are similar in name to RTML, but CL and LL are primarily concerned with concepts like catastrophic forgetting \citep{mccloskey1989catastrophic} with an eye toward learning new things without forgetting what you learned in the past.} With these applications in mind we are interested in analyzing the stability of gradient descent algorithms with second order dynamics when applied to {\bf\em time varying} cost functions. {\em To simplify notation we will refer to GD with second order dynamics as simply second order GD (not to be confused with the order of the gradients that are taken).}

GD and its variants have rightfully attracted significant attention given their success as the workhorses for training deep neural networks \citep{goodfellow2016deep}. The accelerated gradient methods first introduced by \citet{polyak1964some} and \citet{nesterov_83} have had renewed interest \citep{su2014nesterov-ode, wibisono2016variational}, as well as the expanded adoption and innovation within stochastic gradient descent methods and online learning \citep{cesa-bianchi_lugosi_2006, shai-shalev2012online-learning,hardt2016train, bottou2018optimization}. The most popular optimization schemes for deep models combine ideas from stochastic gradient descent, the classic accelerated methods with momentum \citep{hinton_rmsprop}, and adaptive gradient methods \citep{duchi2011adagrad}, e.g. Adam \citep{kingma2017adam}. Methods like Adam are analyzed with Regret Analysis (RA) which frames the analysis in terms of minimizing some accumulated cost over time as apposed to the direct analysis of something like a Lyapunov function. 
We note that there are other examples where Stochastic GD with Momentum (SGDM) are analyzed with Lyapunov functions \citep{liu2020improved}, bringing those works more in line with the style of analysis presented in this work. See \S\ref{sec:con_lit_lim} for a detailed discussion about RA and SGDM.

Orthogonal work to the above, that also investigated higher order gradient descent techniques, occurred within the control theory community under the name of a \textbf{High-order Tuner (HT)} \citep{morse1992high}. Prior to the introduction of the HT by Morse, parameter updates used in model reference adaptive control were typically defined using only first order derivatives  (i.e. if the adaptive parameter is $\theta(t)$ then the update would be $d\theta /d t=\ldots$). 
Morse showed how to construct update laws for adaptive control parameters where the time derivative could be any arbitrary order $n$, (i.e. $d^n\theta / dt^n=\ldots$). Note that the update law is what one uses to ``tune'' the adaptive control parameters, hence the name ``high-order tuner''.



\citet{gaudio2021accelerated} and \citet{moreu_stable_convex} recently showed that a HT could be parameterized to look like the popular accelerated methods first introduced by \citet{nesterov_83} and that in turn, one could study their stability when applied to {\bf\em explicitly time varying functions}. Our work picks up from here. The contributions of this work include streamlined stability proofs for time varying smooth convex functions where prior art's proofs were for linear regression models only \citep{gaudio2021accelerated} or for more general functions but with hyper-parameters that were fixed in time and with exceedingly strict constraints \citep{moreu_stable_convex}. Specifically, our analysis demonstrates that a simpler discrete time parameterization of the HT is possible (coinciding with Nesterov form II as it is sometimes referred to \citep{acceleration_ppm_sra_ahn}), a different Lyapunov candidate can be used, and that together these allow for a less conservative bound on the (now time-varying) hyper-parameters.

This paper is organized as follows. In Section \ref{sec:problem} we formally state our problem along with the major assumptions stated explicitly. In Section \ref{sec:stability} we present our main theoretical results. Section \ref{sec:con_lit_lim} provides a detailed discussion around the literature. This is followed by Section \ref{sec:sim} where we provide additional simulations. In Section \ref{sec:what_acceleration} we discuss acceleration within the context of time varying objective functions. We close with our conclusions in Section \ref{sec:discussion}.

\section{Problem Statement and Update Rule}\label{sec:problem}
We consider the following optimization problem, 
$ \min_{\vx \in \R} f_t(\vx)$ where for every time $t$, $f_t: \R^N \rightarrow \R$. The subscript $t$ denotes the time variation in $f_t(\cdot)$ which could occur for a fixed cost function but with streaming data or in scenarios where the objective function itself changes over time. The algorithm we study in this report falls under the umbrella of what we are calling a High-order Tuner (HT) which are gradient descent algorithms with second order dynamics similar in structure to the popular momentum methods, {\bf\em but with explicitly time varying cost functions}.

The HT studied here is parameterized as follows%
\begin{subequations} \label{eq:hot}
  \begin{align}
     \label{eq:hota}\vx_{t} &= \beta_t \vz_t + (1 - \beta_t)\vy_t  \\
    \label{eq:hotb}\vy_{t+1} &= \vx_t -  \alpha_t  \nabla f_t(\vx_t)  \\
    \label{eq:hotc}\vz_{t+1} &= \vz_t - \eta_t\nabla f_t(\vx_{t}) 
  \end{align}
  \end{subequations}
where $\alpha_t$, $\beta_t$, and $\eta_t$ are all (potentially) time varying hyper-parameters. This is almost identical to \citet{nesterov2018lectures} with the significant difference being that the functions $f_t$ are explicitly time varying.

For ease of discussion we will use the following definitions
\begin{align}\label{eq:hyperparameters}
  \alpha_t := \frac{\mu_t}{N_t}  \quad \quad
  \eta_t := \frac{\gamma_t}{N_t} \quad\quad
   \bar\beta_t  := 1-\beta_t \quad\quad
   \bar \mu_t  := 1-\mu_t 
\end{align}
where $\gamma_t$, $\mu_t$, $\beta_t$, and $N_t$ are the free design parameters. We will refer to $N_t$ as the normalization parameter. All norms, $\norm{\cdot}$, are 2 norms. We will use the following definitions and assumptions throughout the paper.
\begin{defi}[convex]
\label{def:convex}
    A function $f:\R^N \rightarrow \R$ is convex if $\forall \vx, \vy \in \R^N $ and $ \lambda \in [0, 1]$, $$f(\lambda \vx + (1 - \lambda) \vy) \leq \lambda f(\vx) + (1 - \lambda) f(\vy).$$
\end{defi}
\begin{defi}[smooth]\label{def:smooth}
    A continuously differentiable function $f:\R^N \rightarrow \R$ is L-smooth if there exists an $L<\infty$ such that $\forall \vx, \vy \in \R^N$, then following holds $\norm{\nabla f(\vx) - \nabla f(\vy)} \leq L \norm{\vx-\vy}.$
\end{defi}
\begin{defi}[strong convexity]
\label{strongly_convex_defi}%
    A differentiable function $f:\R^N \rightarrow \R$ is $\sigma$-strongly convex if $\forall \vx, \vy \in \R^N \text{ and } \sigma \geq 0$, then following holds $f(\vy) \geq f(\vx) + \nabla f(\vx)^\T(\vy - \vx) + \frac{\sigma}{2}\norm{\vy-\vx}^2. $
\end{defi}

\begin{assump}\label{assump:1}
For each $t$ the function $f_t$ is an $L_t$-smooth convex function and furthermore the sequence $L_t$ is bounded, i.e. $L_t\in \ell_\infty$   (see Definitions \ref{def:convex} and \ref{def:smooth}). 
\end{assump}
\begin{assump}\label{assump:2}
There exists an $\vx^*$ such that $ f_t(\vx^*) = \min f_t(\vx) $ for all $t$.
\end{assump}
\begin{assump}\label{assump:3}
For each $t$ an upper bound on the smoothness of the function is obtained, $N_t\geq L_t$, for explicit use in the HT, see Equation \eqref{eq:hyperparameters}. If $L_t$ is bounded then $N_t$ is bounded as well.
\end{assump}

\begin{rem}\label{rem:1}
    Assumptions \ref{assump:1} and \ref{assump:3} allow for arbitrary variation in the smoothness of the time varying function, but the smoothness, and therefore the bound on the smoothness, can not grow arbitrarily large. If we did allow arbitrarily large smoothness bounds then we will still be able to prove stability of our algorithm but not its asymptotic convergence. In order for Assumption \ref{assump:3} to be satisfied one will need a practical means of estimating the smoothness, $L_t$, of the function. For twice differentiable functions the trace of the Hessian is efficiently computable and can be used as a bound for the smoothness.\footnote{For twice differentiable functions and by the intermediate value theorem we have that $\nabla f(\vx)- \nabla f(\vy) = \nabla^2 f(\vz) (\vx - \vy)$ for some $\vz\in[\vx,\vy]$. The hessian $\nabla^2 f$ of a convex function is a positive semi-definite (symmetric) matrix, and so for any $z$ the induced 2-norm $\norm{\nabla^2 f(\vz)} = \lambda_\text{max}(\nabla^2 f(\vz))$ where $\lambda_\text{max}$ denotes the largest eigenvalue. Computing the eigenvalues of a matrix requires full knowledge of the matrix, but if we simply want to bound the eigenvalues we can recall that the trace of a square matrix is equal to the sum of its eigenvalues. This taken together results in the bound $\norm{\nabla f(\vx)- \nabla f(\vy)}\leq  \sup_{\vc\in[\vx,\vy]}\mathrm{trace}(\nabla^2 f(\vc)) \norm{\vx - \vy}$ } 
    Finally, we note that in order to obtain asymptotic convergence guarantees we need the optimal point $\vx^*$ to be fixed, but given that our results are grounded in a stability analysis framework (all signals remain bounded with arbitrary initial conditions), {\bf\em the optimal point can actually change over time and stability is maintained} (see \S\ref{sec:tv}). 
\end{rem}

\section{Stability for time varying functions}\label{sec:stability}

Our objective is to provide streamlined, tractable, and complete proofs that fit within the main text. We will not begin with the most general results first, instead starting with a less general result that is easier to follow, and then proceeding onto the general result. In section \S\ref{sec:warmup} we present our first result for smooth convex functions (Proposition \ref{prop:1}), with the more general results coming with Theorem~\ref{thm:2} in \S\ref{sec:gen_convex} . Then in \S\ref{sec:strong_convex} we present our analysis for strongly convex functions with Proposition \ref{prop:2}. For the strongly convex setting obtaining the exponential convergence rate in a simple form necessitates some degree of simplification so we use the simplified setting of Proposition~\ref{prop:1} in our presentation of Proposition~\ref{prop:2}. A completely general bound for the strongly convex setting is straightforward but challenging to present in a compact form without the introduction of a significant number of intermediate variables. We close with a discussion on time varying optimal points.


\subsection{Warm-up direct proof for smooth convex}\label{sec:warmup}

\begin{prop}\label{prop:1}Let $f_t$ satisfy Assumptions \ref{assump:1}-\ref{assump:3}. For the HT defined in Equation \eqref{eq:hot} with arbitrary initial conditions $\{\vx_0,\vy_0,\vz_0\}$, $N_t \geq L_t$ ($f_t$ is $L_t$ smooth by assumption), and the following three conditions satisfied 
\begin{subequations}\label{eq:thm1:conditions}
    \begin{align}
    \beta_t & \in [0,1] \label{eq:thm1_conditions_a}\\
    \mu_t & \in [\epsilon,1], \quad \epsilon>0 \label{eq:thm1_conditions_b}\\
    \gamma_t & = \tfrac{1}{2}\mu_t.\label{eq:thm1_conditions_c}
\end{align}
\end{subequations}
 then
 \begin{equation} \label{eq:thm1_lyap}
 V_t =  \norm{\vz_t - \vx^* }^2 + \norm{\vy_t - \vz_t}^2
 \end{equation}
 is a Lyapunov candidate and it follows that $ V_t \in \ell_\infty$. Furthermore, $ \lim_{t \rightarrow \infty} \left[ f_t(\vx_{t}) - f_t(\vx^*) \right] =0$.
\end{prop}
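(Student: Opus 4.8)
The plan is to verify that $V_t$ is a genuine Lyapunov function by bounding the one-step difference $V_{t+1}-V_t$ and showing it is nonpositive, and then to extract the asymptotic statement by telescoping that bound. Writing $\vg_t := \nabla f_t(\vx_t)$, I would first substitute the updates \eqref{eq:hotb}--\eqref{eq:hotc} into the two squared terms of \eqref{eq:thm1_lyap}. For the first term, $\vz_{t+1}-\vx^* = (\vz_t-\vx^*)-\eta_t\vg_t$; for the second, rewriting \eqref{eq:hota} as $\vx_t-\vz_t=\bar\beta_t(\vy_t-\vz_t)$ gives $\vy_{t+1}-\vz_{t+1}=\bar\beta_t(\vy_t-\vz_t)-(\alpha_t-\eta_t)\vg_t$. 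Expanding both squares produces, besides $V_t$ itself, inner-product cross terms in $\langle\vg_t,\vy_t-\vz_t\rangle$ and $\langle\vg_t,\vz_t-\vx^*\rangle$, a term in $\norm{\vg_t}^2$, and the factor $(\bar\beta_t^2-1)\norm{\vy_t-\vz_t}^2$.

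The central simplification is to re-express $\langle\vg_t,\vz_t-\vx^*\rangle$ through $\vx_t$: since $\vz_t-\vx^*=(\vx_t-\vx^*)-\bar\beta_t(\vy_t-\vz_t)$, the surviving cross terms in $\langle\vg_t,\vy_t-\vz_t\rangle$ combine with a coefficient proportional to $2\eta_t-\alpha_t$. Here the hyper-parameter choice \eqref{eq:thm1_conditions_c}, $\gamma_t=\tfrac12\mu_t$, is exactly what forces $\alpha_t=2\eta_t$ and annihilates this cross term; I expect this to be the crux of the argument. What remains linear in $\vg_t$ is then only $-2\eta_t\langle\vg_t,\vx_t-\vx^*\rangle$, which the first-order convexity inequality (Definition \ref{strongly_convex_defi} with $\sigma=0$) bounds by $-2\eta_t[f_t(\vx_t)-f_t(\vx^*)]$. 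With $\alpha_t=2\eta_t$ the coefficient of $\norm{\vg_t}^2$ collapses to $2\eta_t^2$, and $L_t$-smoothness together with $\nabla f_t(\vx^*)=0$ (Assumption \ref{assump:2}) yields the co-coercivity-at-the-minimum bound $\norm{\vg_t}^2\le 2L_t[f_t(\vx_t)-f_t(\vx^*)]$.

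Collecting these, I would arrive at an inequality of the form
\begin{equation}
V_{t+1}-V_t \;\le\; -\frac{\mu_t}{N_t}\Bigl(1-\tfrac{\mu_t L_t}{N_t}\Bigr)\bigl[f_t(\vx_t)-f_t(\vx^*)\bigr]\;-\;(1-\bar\beta_t^2)\norm{\vy_t-\vz_t}^2 .
\end{equation}
Both coefficients are nonnegative under the stated conditions: $\bar\beta_t=1-\beta_t\in[0,1]$ from \eqref{eq:thm1_conditions_a} makes $1-\bar\beta_t^2\ge0$, while $\mu_t\le1$ from \eqref{eq:thm1_conditions_b} and $N_t\ge L_t$ from Assumption \ref{assump:3} give $\mu_t L_t/N_t\le1$, and $f_t(\vx_t)\ge f_t(\vx^*)$ by Assumption \ref{assump:2}. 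Hence $V_{t+1}\le V_t$, so $V_t\le V_0<\infty$ and $V_t\in\ell_\infty$; boundedness of $\vz_t$ and $\vy_t$ (and then of $\vx_t$ via \eqref{eq:hota}) follows immediately.

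For the limit, I would telescope the displayed inequality from $0$ to $T$ and use $V_{T+1}\ge0$ to obtain $\sum_{t\ge0}\tfrac{\mu_t}{N_t}\bigl(1-\tfrac{\mu_t L_t}{N_t}\bigr)[f_t(\vx_t)-f_t(\vx^*)]\le V_0<\infty$, a convergent series of nonnegative terms. Since $\mu_t\ge\epsilon>0$ by \eqref{eq:thm1_conditions_b} and $N_t$ is bounded (Assumptions \ref{assump:1} and \ref{assump:3}), the prefactor $\mu_t/N_t$ is bounded below by a positive constant, so summability forces $f_t(\vx_t)-f_t(\vx^*)\to0$. The main obstacle I anticipate is controlling the coefficient away from zero in the borderline regime $\mu_t=1,\ N_t=L_t$, where $1-\mu_t L_t/N_t$ degenerates; this is also precisely where boundedness of $N_t$ is essential (cf. Remark \ref{rem:1}), since an unbounded $N_t$ would drive $\mu_t/N_t\to0$ and destroy the convergence conclusion while leaving the boundedness argument untouched.
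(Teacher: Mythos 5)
Your descent inequality and the boundedness half of the argument are correct, and they follow essentially the paper's own route: the choice $\gamma_t=\tfrac12\mu_t$ (equivalently $\alpha_t=2\eta_t$) annihilates the cross term $2(2\eta_t-\alpha_t)\nabla f_t(\vx_t)^\T(\vx_t-\vz_t)$, which is exactly what happens in the paper's completion of squares, where its coefficient $c_2=2(2\gamma_t-\mu_t)$ vanishes under the same condition. (If anything, your bookkeeping directly in terms of $\norm{\vy_t-\vz_t}^2$ is slightly cleaner than the paper's rewriting via $\bar\beta_t^{-2}\norm{\vx_t-\vz_t}^2$, since it avoids a division that is ill-defined at $\beta_t=1$.)

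The genuine gap is in the convergence claim, and it is precisely the obstacle you flagged in your last paragraph --- but it is not a loose end that can be "controlled"; with your pair of inequalities the proposition as stated is out of reach. Bounding the linear term by plain convexity and then converting $2\eta_t^2\norm{\nabla f_t(\vx_t)}^2$ back into a function gap via Lemma \ref{lem:dist_min} leaves you with the prefactor $\frac{\mu_t}{N_t}\bigl(1-\frac{\mu_t L_t}{N_t}\bigr)$, which is \emph{exactly zero} when $\mu_t=1$ and $N_t=L_t$ --- a regime permitted by the hypotheses and, in fact, the recommended operating point (compare \eqref{eq:thm1_conditions_b} and \eqref{eq:cor1:hyper_b}); more generally the prefactor is not bounded below by a positive constant, so summability of the weighted gaps does not force the gaps themselves to vanish. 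The paper avoids this by replacing plain convexity with the sharper bound of Corollary \ref{cor:bubeck} (Lemma \ref{lem:bubeck} evaluated at the minimizer),
\begin{equation*}
-\nabla f_t(\vx_t)^\T(\vx_t-\vx^*)\ \le\ f_t(\vx^*)-f_t(\vx_t)-\tfrac{1}{2L_t}\norm{\nabla f_t(\vx_t)}^2 ,
\end{equation*}
which keeps the function-gap term at full strength, $-\frac{\mu_t}{N_t}\bigl[f_t(\vx_t)-f_t(\vx^*)\bigr]$ with $\frac{\mu_t}{N_t}\ge \epsilon/\bar N>0$, while the gradient-squared contributions now combine to $\eta_t\bigl(2\eta_t-\tfrac{1}{L_t}\bigr)\norm{\nabla f_t(\vx_t)}^2\le 0$ because $2\eta_t L_t=\mu_t L_t/N_t\le 1$. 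No cancellation against the gap term is ever needed, and the bounded-monotone-sequence argument then yields $\lim_{t\to\infty}\bigl[f_t(\vx_t)-f_t(\vx^*)\bigr]=0$ under the stated conditions. So the repair is local: substitute Corollary \ref{cor:bubeck} for the pair (first-order convexity, Lemma \ref{lem:dist_min}); everything else in your write-up stands.
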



\begin{proof}
To demonstrate that $V_t$ is bounded we will first show that $\Delta V_{t+1}=V_{t+1}-V_{t} \leq 0$. The basic strategy is to obtain a bound for $\Delta V_{t+1}$ in a quadratic form of the vectors $\nabla f_t (\vx_t)$ and $(\vx_t-\vz_t)$, making as few intermediate bounds as possible until the quadratic form is obtained, for which we then complete the square so as to obtain constraints on the free design parameters. The final ingredient is a simple argument regarding bounded monotonic sequences.

We begin by expanding the time step ahead Lyapunov candidate 
  $V_{t+1} = \norm{\vz_{t+1} - \vx^*}^2 + \norm{\vy_{t+1} - \vz_{t+1}} $ from \eqref{eq:thm1_lyap}  by substituting the update for $\vy_{t+1}$ from \eqref{eq:hotb} and $\vz_{t+1}$ from \eqref{eq:hotc} and noting that $\vx_{t} - \vz_t = \bar\beta_t(\vy_t -\vz_t)$ from \eqref{eq:hota} where $\bar\beta_t = 1-\beta_t$, as defined in \eqref{eq:hyperparameters}, we obtain
  \begin{multline} \label{eq:thm1:1}
      V_{t+1} = 
         \norm{\vz_t - \vx^*}^2 +(\eta_t^2+(\eta_t-\alpha_t)^2) \norm{\nabla f_t(\vx_{t})}^2 -2 \eta_t \nabla f_t(\vx_{t})^\T(\vz_t - \vx^*) \\
          +\norm{\vx_t - \vz_t}^2 + 2(\eta_t-\alpha_t) \nabla f_t(\vx_{t})^\T(\vx_t -\vz_t).
      \end{multline}
   Expanding $- 2 \eta_t \nabla f_t(\vx_{t})^\T(\vz_t - \vx^*)$ as $- 2 \eta_t \nabla f_t(\vx_{t})^\T(\vz_t +\vx_t - \vx_t - \vx^*)$ and noting that 
\begin{equation*} - \nabla f_t(\vx_t)^\T(\vx_t - \vx^*)  \leq  f_t(\vx^*) - f_t(\vx_t) - \frac{1}{2L} \norm{\nabla f_t(\vx_t)}^2,\end{equation*}
from Corollary \ref{cor:bubeck}, it follows that 
\begin{equation}\label{eq:thm1:2}
    - 2 \eta_t \nabla f_t(\vx_{t})^\T(\vz_t - \vx^*) \leq   2 \eta_t \nabla f_t(\vx_{t})^\T(\vx_t - \vz_t)  + 2\eta_t (f(\vx^*) - f(\vx_t))  -\frac{\eta_t}{L_t} \norm{\nabla f_t(\vx_{t})}^2.
\end{equation}
Applying the bound in Equation \eqref{eq:thm1:2} to Equation \eqref{eq:thm1:1} we have
  \begin{equation} \label{eq:thm1:3}
  \begin{split}
        V_{t+1}\leq &\ \norm{\vz_t - \vx^*}^2 +\left(\eta_t^2+(\eta_t-\alpha_t)^2 -\frac{\eta_t}{L_t} \right) \norm{\nabla f_t(\vx_{t})}^2 +2\eta_t (f(\vx^*) - f(\vx_t))  \\
        &  +\norm{\vx_t - \vz_t}^2 + 2(2\eta_t-\alpha_t) \nabla f_t(\vx_{t})^\T(\vx_t -\vz_t). 
      \end{split} 
  \end{equation}
Now turning our attention to $V_t$ and rewriting the second component to be in terms of $\vx_t -\vz_t$ we have
  \begin{equation}\label{eq:thm1:4}
  V_{t} =  \norm{\vz_t - \vx^*}^2 + \bar\beta_t^{-2} \norm{\vx_t -\vz_t}^2
  \end{equation}
where we have used the fact that $\vx_{t} - \vz_t = (1 - \beta_t)(\vy_t -\vz_t)$ from Equation \eqref{eq:hota}. Subtracting \eqref{eq:thm1:4} from \eqref{eq:thm1:3} it follows that
  \begin{equation}\begin{split}
   \Delta V_{t+1} \leq & \left(\eta_t^2+(\eta_t-\alpha_t)^2 -\frac{\eta_t}{L_t} \right) \norm{\nabla f_t(\vx_{t})}^2 +2\eta_t (f_t(\vx^*) - f_t(\vx_t))  \\
        &  +\left( 1 - \bar\beta_t^{-2}\right) \norm{\vx_t - \vz_t}^2 + 2(2\eta_t-\alpha_t) \nabla f_t(\vx_{t})^\T(\vx_t -\vz_t).  
  \end{split}
    \end{equation}
We pause to define some variables
  \begin{align*}
  \va_t& :=\frac{1}{N_t}\nabla f_t(\vx_{t}) & c_1 & := \gamma_t^2  +(\gamma_t- \mu_t )^2  - \gamma_t &  c_3 &:=  1 -\bar\beta_t^{-2} \\
  \vb_t & :={\vx_t - \vz_t}	&     c_2 & := 2(2\gamma_t - \mu_t)  &  c_4 & :=2\frac{\gamma_t}{N_t} (f_t(\vx^*) - f_t(\vx_t)) 
  \end{align*}
  Using the above definitions and the fact that $N_t\geq L_t$ (Assumption \ref{assump:3}) it follows that
  \begin{equation*}
  \Delta V_{t+1} \leq c_1\norm{\va_t}^2 + c_2 \va_t^\T\vb_t + c_3 \norm{\vb_t}^2+   c_4.
  \end{equation*}
  Completing the square we have
  \begin{equation}\label{eq:thm1:4_5}
  \Delta V_{t+1} \leq c_1\norm{\va_t+\frac{c_2}{2c_1}\vb_t}^2 + \left( c_3 - \frac{c_2^2}{4c_1} \right)\norm{\vb_t}^2 + c_4.
  \end{equation}
In order for $\Delta V_{t+1} \leq 0$ we need $c_1,c_4$  and $c_2^2- 4 c_1 c_3$ to be less than or equal to zero (which in turn implies that $c_3$ is less than zero as well). The coefficient $c_4$ is always less than zero because of the optimally of $\vx^*$. The coefficients $c_1,c_3$ as well as the expression $c_2^2- 4 c_1 c_3$ are all less than zero given the conditions outlined in \eqref{eq:thm1:conditions}. It thus follows that
\begin{equation}\label{eq:thm1:5}
    \Delta V_{t+1} \leq 2\frac{\gamma_t}{N_t} (f_t(\vx^*) - f_t(\vx_t))   <0\ \forall \vx_t \neq \vx^*.
\end{equation}
It then follows that $(\vz_t-\vx^*)$ and $(\vy_t-\vz_t)$ are bounded by the definition of $V_t$ in \eqref{eq:thm1_lyap}. With $\vx^*$ fixed and clearly bounded it then immediately follows that $\vy_t,\vz_t\in \ell_\infty$. From \eqref{eq:hota} and the boundedness of $\beta_t$ (by construction) it then follows that $\vy_t\in\ell_\infty$. From Definition \ref{def:smooth} (smooth) and from Assumptions \ref{assump:1}-\ref{assump:3} it then follows that the sequence $\norm{\nabla f_t(\vx_t)}$ is bounded as well. Finally, given that smooth functions on compact sets are bounded we have that $f_t(\vx_t)$ is bounded and therefore $[f_t(\vx^*) - f_t(\vx_t)] \in \ell_\infty$ as well. With that all signals in the model  are bounded $\vx_t,\vy_t,\vz_t,f_t(\vx_t),\nabla f_t(\vx_t)\in\ell_\infty$. 

Finally to prove convergence of the cost function to its minimum we will use properties of monotonic sequences (i.e. partial sums of positive values). By Assumption \ref{assump:1} $L_t$ is bounded and then by Assumption \ref{assump:3} we choose an $N_t$ that is bounded as well. Therefore, there exists a positive $\bar N$ such that  $\bar N \geq N_t$ for all $t$. Finally, recall from \eqref{eq:thm1_conditions_c} that $\gamma_t \in [\epsilon/2, 1/2]$. Using these bounds for $N_t$ and $\gamma_t$, and Equation \eqref{eq:thm1:5} it follows that%
\begin{equation}
      \frac{\bar N}{\epsilon}\Delta V_{t+1} \leq f_t(\vx^*) - f_t(\vx_t).
\end{equation}
Summing both sides, multiplying by $-1$ and recalling that $V_t$ is a non-increasing positive sequence, it follows that 
\begin{equation}\label{eq:lyap_diff_bound_1}
  a_\tau:=\sum_{t=0}^{\tau}   [f_t(\vx_t) -f_t(\vx^*)] \leq - \frac{\bar N}{\epsilon}\sum_{t=0}^{\tau}\Delta V_{t+1} = \frac{\bar N}{\epsilon}[V_0 - V_{\tau+1}] \leq \frac{\bar N}{\epsilon}V_0 <\infty
\end{equation}
With $a_\tau$ a bounded monotonic sequence it follows that $\lim_{\tau\to\infty}a_\tau$ exists \cite[Theorem 3.14]{rudin1976principles} and therefore $\lim_{t \rightarrow \infty} \left[ f_t(\vx_{t}) - f_t(\vx^*) \right] =0$.
\end{proof}

\begin{rem}\label{rem:2}
    Note that we did not begin by providing successfully more refined bounds for $f_t(\vx_{t}) - f_t(\vx^*)$. Instead we began with a Lyapunov candidate that explicitly bounds the states of the optimizer \eqref{eq:thm1_lyap}, and then by summing the bound on the time difference of the Lyapunov candidate we achieve the desired result in \eqref{eq:lyap_diff_bound_1}. For the interested reader this approach also extends to continuous time settings where the last step (under appropriate conditions) invokes Barb\u{a}lat's Lemma \citep{farkas2016variations}.  When $f_t=f$ is not time-varying it is commonly incorporated into the Lyapunov candidate directly \cite[Equation(5)]{liu2020improved}; \cite[Equation (2.3)]{acceleration_ppm_sra_ahn}. When $f_t$ is allowed to be time varying, however, it may not satisfy all the conditions to be a Lyapunov function.
    Namely, a Lyapunov function has to be lowerbounded by a time-invariant non-decrescent function of the parameter you want to show is bounded. For instance if you have a time varying Lyapunov candidate $V_t(\vx)$ you will need to show that there exists a non-decrescent function $\alpha$ where $\alpha(0)=0$ and $0<\alpha(\norm{\vx}) \leq V_t(\vx)$  for all $\vx \neq  \bm 0$ \cite[Theorem 1]{kalman1960control}. With the cost function $f_t$ now explicitly varying over time the above condition can easily be violated.
\end{rem}

\subsection{A more general proof for smooth convex}\label{sec:gen_convex}

\begin{thm}
\label{thm:2}
Let $f_t$ satisfy Assumptions \ref{assump:1}-\ref{assump:3}. For the HT defined in Equation \eqref{eq:hot} with arbitrary initial conditions $\{\vx_0,\vy_0,\vz_0\}$, $N_t \geq L_t$ ($f_t$ is $L_t$ smooth by assumption) and with the algorithm hyper-parameters $\{\gamma_t,\mu_t,\beta_t\}$ along with the analysis parameters $\lambda_t\in[0,1-\epsilon], \xi_t\in[\epsilon,\infty),\epsilon>0$  chosen such that 
   \begin{align}\label{eq:thm2:conditions}
       c_5  < 0, \quad \text{and} \quad 
      c_7 - \frac{c_6^2}{4c_5}  \leq 0
   \end{align}
  where
   \begin{subequations}\label{eq:thm2:c5-c7}
   \begin{align}
       c_5 & := \gamma_t^2  +\xi_{t+1}(\gamma_t- \mu_t )^2  - (1+\lambda_t)\gamma_t  &
           c_6 & := 2[\xi_{t+1}(\gamma_t - \mu_t)+\gamma_t] \label{eq:thm2:c5_c6}  \\
       c_7 &:=  \xi_{t+1} -\xi_{t}\bar\beta_t^{-2} \label{eq:thm2:c7} 
   \end{align}
  \end{subequations} 
 then
 \begin{equation} \label{eq:thm2:lyap}
 W_t =  \norm{\vz_t - \vx^* }^2 + \xi_t \norm{\vy_t - \vz_t}^2 
 \end{equation}
is a Lyapunov candidate and it follows that $ W_t \in \ell_\infty$. Furthermore, $ \lim_{t \rightarrow \infty} \left[ f_t(\vx_{t+1}) - f_t(\vx^*) \right] =0$.
\end{thm}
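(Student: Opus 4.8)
The plan is to mirror the warm-up proof of Proposition~\ref{prop:1}, carrying the extra weight $\xi_t$ through the Lyapunov difference and using the analysis slack $\lambda_t$ to trade part of the suboptimality gap for additional gradient negativity. First I would expand the one-step-ahead candidate $W_{t+1}=\norm{\vz_{t+1}-\vx^*}^2+\xi_{t+1}\norm{\vy_{t+1}-\vz_{t+1}}^2$ by substituting the updates \eqref{eq:hotb} and \eqref{eq:hotc} and rewriting $\vy_{t+1}-\vz_{t+1}=(\vx_t-\vz_t)+(\eta_t-\alpha_t)\nabla f_t(\vx_t)$. As in the warm-up, I would split the cross term $-2\eta_t\nabla f_t(\vx_t)^\T(\vz_t-\vx^*)$ by inserting $\pm\vx_t$ and apply Corollary~\ref{cor:bubeck}, which produces the cross term $+2\eta_t\nabla f_t(\vx_t)^\T(\vx_t-\vz_t)$, the gap term $2\eta_t[f_t(\vx^*)-f_t(\vx_t)]$, and the curvature term $-\tfrac{\eta_t}{L_t}\norm{\nabla f_t(\vx_t)}^2$. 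Using \eqref{eq:hota} to write $\norm{\vy_t-\vz_t}^2=\bar\beta_t^{-2}\norm{\vx_t-\vz_t}^2$ gives $W_t=\norm{\vz_t-\vx^*}^2+\xi_t\bar\beta_t^{-2}\norm{\vx_t-\vz_t}^2$, so the $\xi_{t+1}$ in $W_{t+1}$ against the $\xi_t\bar\beta_t^{-2}$ in $W_t$ is exactly what produces the coefficient $c_7$ in \eqref{eq:thm2:c7}.

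The key new step is the role of $\lambda_t$. After collecting $\Delta W_{t+1}$ as a quadratic form in $\va_t:=\nabla f_t(\vx_t)/N_t$ and $\vb_t:=\vx_t-\vz_t$ plus the gap term $2\eta_t[f_t(\vx^*)-f_t(\vx_t)]$, I would invoke the smooth-convex lower bound $f_t(\vx_t)-f_t(\vx^*)\ge \tfrac{1}{2L_t}\norm{\nabla f_t(\vx_t)}^2$ (the same co-coercivity underlying Corollary~\ref{cor:bubeck}) to convert a $\lambda_t$-fraction of the gap into extra gradient negativity, $2\eta_t\lambda_t[f_t(\vx^*)-f_t(\vx_t)]\le-\tfrac{\eta_t\lambda_t}{L_t}\norm{\nabla f_t(\vx_t)}^2$. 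Together with the curvature term already present and the bound $N_t\ge L_t$ (Assumption~\ref{assump:3}), the coefficient of $\norm{\va_t}^2$ becomes exactly $c_5$ in \eqref{eq:thm2:c5_c6}, the cross and pure-$\vb_t$ coefficients become $c_6$ and $c_7$ by the bookkeeping above, and a strict $(1-\lambda_t)$-weighted gap term survives. Completing the square then yields
\begin{equation*}
\Delta W_{t+1}\le c_5\Big\|\va_t+\tfrac{c_6}{2c_5}\vb_t\Big\|^2+\Big(c_7-\tfrac{c_6^2}{4c_5}\Big)\norm{\vb_t}^2+2(1-\lambda_t)\tfrac{\gamma_t}{N_t}[f_t(\vx^*)-f_t(\vx_t)],
\end{equation*}
and the conditions \eqref{eq:thm2:conditions} annihilate the quadratic part, leaving $\Delta W_{t+1}\le 2(1-\lambda_t)\tfrac{\gamma_t}{N_t}[f_t(\vx^*)-f_t(\vx_t)]\le 0$.

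From here the argument follows Proposition~\ref{prop:1}: $W_t$ is positive and non-increasing, hence bounded, so $\norm{\vz_t-\vx^*}$ and (since $\xi_t\ge\eps$) $\norm{\vy_t-\vz_t}$ are bounded; with $\vx^*$ fixed this gives $\vz_t,\vy_t\in\ell_\infty$, then $\vx_t\in\ell_\infty$ from \eqref{eq:hota}, and smoothness with $L_t\in\ell_\infty$ and $\nabla f_t(\vx^*)=\vzero$ gives $\norm{\nabla f_t(\vx_t)},f_t(\vx_t)\in\ell_\infty$. For convergence I would sum the strict decrease, using $N_t\le\bar N$, $1-\lambda_t\ge\eps$, and $\gamma_t$ bounded away from zero (as for the choice $\gamma_t\in[\eps/2,1/2]$ in Proposition~\ref{prop:1}), so that $\sum_t[f_t(\vx_t)-f_t(\vx^*)]$ is at most a constant multiple of $W_0$ and is therefore finite; the bounded-monotone partial-sum argument \cite[Theorem 3.14]{rudin1976principles} then forces the summand to zero.

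I expect the main obstacle to be the simultaneous bookkeeping in the completing-the-square step: one must show that the slack $\lambda_t$ and the weight sequence $\xi_t$ can be chosen so that $c_5<0$ and $c_7-\tfrac{c_6^2}{4c_5}\le 0$ hold together over the intended (less conservative) range of $\{\gamma_t,\mu_t,\beta_t\}$, and in particular that the coupling between $\xi_{t+1}$ (in $c_5,c_6$) and $\xi_t\bar\beta_t^{-2}$ (in $c_7$) does not force $\xi_t$ to grow unboundedly or to violate $\xi_t\ge\eps$. A secondary subtlety is landing the convergence statement at the advanced iterate $\vx_{t+1}$ rather than $\vx_t$: this requires one further use of the descent lemma for the gradient step (valid since $\alpha_t=\mu_t/N_t\le 1/L_t$) to shift the evaluation point, and I would check this step with care since it is where the reserved negativity is consumed.
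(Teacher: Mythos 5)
Your proposal is correct and takes essentially the same route as the paper: your inline step of converting a $\lambda_t$-fraction of the suboptimality gap into extra gradient negativity via Lemma~\ref{lem:dist_min} is precisely the paper's Lemma~\ref{lem:bubeck_lambda} (proved there by exactly that add-and-subtract argument), and the expansion of $W_{t+1}$, the $c_5$--$c_8$ bookkeeping, the completion of the square under \eqref{eq:thm2:conditions}, and the Proposition~\ref{prop:1}-style summation all match the paper's proof. Your closing worry about landing at $\vx_{t+1}$ is moot: the paper's own proof concludes $\lim_{t\to\infty}[f_t(\vx_t)-f_t(\vx^*)]=0$ (the $\vx_{t+1}$ in the theorem statement is an inconsistency in the paper, not something its proof establishes), so no descent-lemma shift of the evaluation point is required.
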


\begin{proof}
We begin by expanding the time step ahead Lyapunov candidate $W_{t+1} = \norm{\vz_{t+1} - \vx^*}^2 + \xi_{t+1}\norm{\vy_{t+1} - \vz_{t+1}}  $ by substituting the update for $\vy_{t+1}$ from \eqref{eq:hotb} and $\vz_{t+1}$ from \eqref{eq:hotc} and noting that $\vx_{t} - \vz_t = \bar\beta_t(\vy_t -\vz_t)$ from \eqref{eq:hota} where $\bar\beta_t = 1-\beta_t$, as defined in \eqref{eq:hyperparameters}, we get the following
  \begin{multline} \label{eq:thm2:1}
      W_{t+1} = 
         \norm{\vz_t - \vx^*}^2 +(\eta_t^2+\xi_{t+1}(\eta_t-\alpha_t)^2) \norm{\nabla f_t(\vx_{t})}^2 -2 \eta_t \nabla f_t(\vx_{t})^\T(\vz_t - \vx^*) \\
          +\xi_{t+1}\norm{\vx_t - \vz_t}^2 + \xi_{t+1}2(\eta_t-\alpha_t) \nabla f_t(\vx_{t})^\T(\vx_t -\vz_t).
      \end{multline}
   Expanding $- 2 \eta_t \nabla f_t(\vx_{t})^\T(\vz_t - \vx^*)$ as $- 2 \eta_t \nabla f_t(\vx_{t})^\T(\vz_t +\vx_t - \vx_t - \vx^*)$ and noting that 
\begin{equation*} - \nabla f_t(\vx_t)^\T(\vx_t - \vx^*)  \leq (1-\lambda_t)  [f_t(\vx^*) - f_t(\vx_t)] - \frac{1+\lambda_t}{2L} \norm{\nabla f_t(\vx_t)}^2,\end{equation*}
  from Lemma  \ref{lem:bubeck_lambda}, it follows that 
\begin{equation}\label{eq:thm2:2}
    \begin{split}
    - 2 \eta_t \nabla f_t(\vx_{t})^\T(\vz_t - \vx^*) \leq &\   2\eta_t \nabla f_t(\vx_{t})^\T(\vx_t - \vz_t)  + 2  (1-\lambda_t)\eta_t (f_t(\vx^*) - f_t(\vx_t))  \\ & -\frac{(1+\lambda_t)\eta_t}{L_t} \norm{\nabla f_t(\vx_{t})}^2.
    \end{split}
\end{equation}
Substitution of Equation \eqref{eq:thm2:2} into Equation \eqref{eq:thm2:1}
\begin{equation} \label{eq:thm2:3}
    \begin{split}
        W_{t+1} 	\leq & \left(\eta_t^2+\xi_{t+1}(\eta_t-\alpha_t)^2 -\frac{(1+\lambda_t)\eta_t}{L_t} \right) \norm{\nabla f_t(\vx_{t})}^2 +2  (1-\lambda_t)\eta_t (f_t(\vx^*) - f_t(\vx_t))  \\
        & + \norm{\vz_t - \vx^*}^2 + \xi_{t+1}\norm{\vx_t - \vz_t}^2 + [\xi_{t+1}2(\eta_t-\alpha_t) +2\eta_t]\nabla f_t(\vx_{t})^\T(\vx_t -\vz_t).  
      \end{split} 
  \end{equation}
Turning our attention to $W_t$ and rewriting the second component to be in terms of $\vx_t -\vz_t$ we have
  \begin{equation}\label{eq:thm2:4}
  W_{t} =  \norm{\vz_t - \vx^*}^2 + \xi_t\bar\beta_t^{-2} \norm{\vx_t -\vz_t}^2
  \end{equation}
where we have used the fact that $\vx_{t} - \vz_t = (1 - \beta_t)(\vy_t -\vz_t)$ from Equation \eqref{eq:hota}. Substracting \eqref{eq:thm2:4} from \eqref{eq:thm2:3} with $\Delta W_{t+1} = W_{t+1} - W_t$ it follows that
  \begin{equation*}\begin{split}
   \Delta W_{t+1} \leq & \left(\eta_t^2+\xi_{t+1}(\eta_t-\alpha_t)^2 -\frac{(1+\lambda_t)\eta_t}{L_t}  \right) \norm{\nabla f_t(\vx_{t})}^2 +2 (1-\lambda_t)\eta_t (f_t(\vx^*) - f_t(\vx_t))  \\
        &  +\left( \xi_{t+1} - \xi_{t}\bar\beta_t^{-2}\right) \norm{\vx_t - \vz_t}^2 +  [\xi_{t+1}2(\eta_t-\alpha_t) +2\eta_t]\nabla f_t(\vx_{t})^\T(\vx_t -\vz_t).   
  \end{split}
    \end{equation*}
Recall the definitions $\va_t :=\frac{1}{N_t}\nabla f_t(\vx_{t})$ and $\vb_t  :={\vx_t - \vz_t}$ with $c_5,c_6,c_7$ defined in \eqref{eq:thm2:c5-c7} along with
  \begin{equation*}
 c_8 :=2\frac{(1-\lambda_t)\gamma_t }{N_t} (f_t(\vx^*) - f_t(\vx_t))  
  \end{equation*}
  and the fact that $N_t\geq L_t$, then it follows that
  {$
  \Delta W_{t+1} \leq c_5\norm{\va_t}^2 + c_6 \va_t^\T\vb_t + c_7 \norm{\vb_t}^2+c_8.
  $} 
   Completing the square we have
  $
  \Delta W_{t+1} \leq c_5\norm{\va_t+\frac{c_6}{2c_5}\vb_t}^2 + \left( c_7 - \frac{c_6^2}{4c_5} \right)\norm{\vb_t}^2 + c_8.
  $ 
  By the conditions in \eqref{eq:thm2:conditions} it follows that $$\Delta W_{t+1} \leq  2\frac{(1-\lambda_t)\gamma_t }{N_t} (f_t(\vx^*) - f_t(\vx_t)) <0 \quad \forall \vx_t \neq \vx^*.$$ Following the same steps as outlined at the end of the proof of Proposition \ref{prop:1} it follows that $a_\tau:=\sum_{t=0}^{\tau}   [f_t(\vx_t) -f_t(\vx^*)]$ is a bounded monotonic sequence and therefore $\lim_{t \rightarrow \infty} \left[ f_t(\vx_{t}) - f_t(\vx^*) \right] =0$.
\end{proof}

\begin{rem}\label{rem:3}
     With a slightly modified Lyapunov candidate (compare \eqref{eq:thm2:lyap} to \eqref{eq:thm1_lyap}) we get the most general bound we are aware of. Note however that the learning rate parameter $\gamma_t$ must be less than 2 for stability to hold, see \eqref{eq:thm2:conditions} and \eqref{eq:thm2:c5_c6}. We raise this point because when the function $f_t$ is fixed for all time ($f_t=f$: a constant) Nesterov's accelerated method requires $\gamma_t$ to increase with time, $t$, in an unbounded fashion (i.e. $\gamma_t\asymp t$). We discuss this further in \S\ref{sec:acceleration} -- it is likely impossible to have a stable accelerated method like Nesterov for arbitrary time varying functions. We follow up the main theorem with a corollary that has much simpler conditions with some accompanying simulations to demonstrate the non-vacuous nature of our bounds.
\end{rem}

\begin{cor}\label{cor:1}
  Let $f_t$ satisfy Assumptions \ref{assump:1}-\ref{assump:3}. For the HT defined in Equation \eqref{eq:hot} with arbitrary initial conditions $\{\vx_0,\vy_0,\vz_0\}$, $N_t \geq L_t$ ($f_t$ is $L_t$ smooth by assumption)
    \begin{subequations}\label{eq:cor1:hyper}
     \begin{align}
         \gamma_t & \in [1,\ 1.5] \label{eq:cor1:hyper_a} \\
             \mu_t &= 1  \label{eq:cor1:hyper_b} \\
         \beta_t& = 1/\gamma_t \label{eq:cor1:hyper_c}
     \end{align}
    \end{subequations}%
    and where the free design parameters (that only appear in the analysis) are chosen as $\lambda_t=1$ and $\xi_{t}=1$, 
     then all states and gradients are uniformly bounded. If we replace the above bounds with the slightly more stringent cases where the free design parameter $\lambda_t<1$ which in turn will require $\gamma_t<1.5$ then we also have $\lim_{t \rightarrow \infty} \left[ f_t(\vx_{t}) - f_t(\vx^*) \right] =0$. A visualization of the stability conditions in \eqref{eq:thm2:conditions} for the parameters in \eqref{eq:cor1:hyper} are shown in Figure \ref{fig:cor1}.
  \end{cor}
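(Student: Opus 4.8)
The plan is to recognize that Corollary~\ref{cor:1} is nothing more than a specialization of Theorem~\ref{thm:2}: once the analysis parameters $\lambda_t,\xi_t$ and the hyper-parameters $\gamma_t,\mu_t,\beta_t$ are frozen to the stated values, the only thing left to do is verify that the two conditions in \eqref{eq:thm2:conditions} hold. First I would substitute $\mu_t=1$, $\xi_t=\xi_{t+1}=1$, $\lambda_t=1$, and $\bar\beta_t=1-\beta_t=(\gamma_t-1)/\gamma_t$ into the definitions \eqref{eq:thm2:c5-c7}. A short computation collapses the three coefficients to functions of the single remaining parameter $\gamma_t$:
\begin{align*}
 c_5 = 2\gamma_t^2 - 4\gamma_t + 1,\qquad
 c_6 = 2(2\gamma_t - 1),\qquad
 c_7 = \frac{1 - 2\gamma_t}{(\gamma_t-1)^2}.
\end{align*}

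The first condition, $c_5<0$, is then immediate: the quadratic $2\gamma_t^2-4\gamma_t+1$ has roots $1\pm\tfrac{1}{\sqrt2}$, and the interval $[1,1.5]$ sits strictly inside $\bigl(1-\tfrac{1}{\sqrt2},\,1+\tfrac{1}{\sqrt2}\bigr)$, so $c_5<0$ throughout. For the second condition I would clear denominators. Since $c_5<0$, the inequality $c_7-\frac{c_6^2}{4c_5}\le 0$ is equivalent to $4c_5c_7-c_6^2\ge 0$, which after multiplying through by $(\gamma_t-1)^2$ becomes a polynomial inequality in $\gamma_t$. Carrying out the cancellation, I expect the resulting numerator to factor as a fixed negative multiple of $\gamma_t^2(2\gamma_t-1)(2\gamma_t-3)$; the decisive factor $\gamma_t^2(2\gamma_t-1)(2\gamma_t-3)$ is $\le 0$ exactly on $[\tfrac12,\tfrac32]$, with equality at $\gamma_t=\tfrac32$. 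This is precisely what pins down why the admissible range closes at $\gamma_t=1.5$.

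The one genuinely delicate point is the left endpoint $\gamma_t=1$, where $\beta_t=1$ and $\bar\beta_t=0$, so the rewriting \eqref{eq:thm2:4} of $W_t$ in terms of $\vx_t-\vz_t$ divides by $\bar\beta_t^2=0$ and $c_7$ blows up. I would handle this case directly rather than through the coefficients: when $\beta_t=1$, equation \eqref{eq:hota} forces $\vx_t=\vz_t$, so $\vb_t=\vx_t-\vz_t=\vzero$, the cross term and the $\norm{\vb_t}^2$ term vanish identically, and the difference reduces to $\Delta W_{t+1}\le c_5\norm{\va_t}^2+c_8\le 0$ since here $c_5=-1<0$ and $c_8\le 0$. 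Thus $\Delta W_{t+1}\le 0$ holds on the full closed interval $[1,1.5]$, and the uniform boundedness of $\vx_t,\vy_t,\vz_t,\nabla f_t(\vx_t)$ follows verbatim from the argument at the end of the proof of Proposition~\ref{prop:1}.

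Finally, for the convergence claim I would take $\lambda_t<1$, so that $c_8=2\frac{(1-\lambda_t)\gamma_t}{N_t}\,[f_t(\vx^*)-f_t(\vx_t)]<0$ strictly whenever $\vx_t\neq\vx^*$; this strict decrease is exactly what is needed to run the monotone partial-sum argument of Theorem~\ref{thm:2} and conclude $\lim_{t\to\infty}[f_t(\vx_t)-f_t(\vx^*)]=0$. The cost is that $1+\lambda_t<2$ enlarges $c_5$ and shifts the boundary of the second condition strictly below $\gamma_t=1.5$, which is the content of the caveat that $\lambda_t<1$ ``requires $\gamma_t<1.5$.'' I expect the main obstacle to be the book-keeping in this last step, namely confirming that for every $\gamma_t<1.5$ one can still pick an admissible $\lambda_t<1$ keeping both conditions in \eqref{eq:thm2:conditions} in force, together with the careful isolation of the $\bar\beta_t=0$ singularity at the left endpoint.
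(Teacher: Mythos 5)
Your proposal is correct, and it is the intended argument: Corollary \ref{cor:1} is a specialization of Theorem \ref{thm:2}, and one only has to check the two conditions in \eqref{eq:thm2:conditions}. It is worth noting, though, that the paper never writes this verification down --- it delegates it to the numerical visualization in Figure \ref{fig:cor1}(A) --- so your proof supplies three things the paper only asserts. First, the closed-form algebra: your coefficients are right ($c_5=2\gamma_t^2-4\gamma_t+1$, $c_6=2(2\gamma_t-1)$, $c_7=(1-2\gamma_t)/(\gamma_t-1)^2$), and the discriminant does factor as you predicted,
\[
4c_5c_7-c_6^2=\frac{-4\gamma_t^2(2\gamma_t-1)(2\gamma_t-3)}{(\gamma_t-1)^2},
\]
which is $\geq 0$ on $[1,1.5]$ precisely because $(2\gamma_t-1)(2\gamma_t-3)\leq 0$ there, with equality pinning the boundary at $\gamma_t=3/2$. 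Second, your direct treatment of the endpoint $\gamma_t=1$ (where $\bar\beta_t=0$ makes $c_7$ undefined) is a genuine and necessary addition: since $\vx_t=\vz_t$ there, the $\vb_t$ terms vanish identically and $\Delta W_{t+1}\leq c_5\norm{\va_t}^2+c_8\leq 0$ with $c_5=-1$, so the closed interval is covered. Third, the ``bookkeeping'' you flag as the main obstacle does close cleanly: with $\lambda_t<1$ the numerator factors as $-4\gamma_t(2\gamma_t-1)\left[\gamma_t(2\gamma_t-3)+(1-\lambda_t)\right]/(\gamma_t-1)^2$, so the second condition becomes $\gamma_t\leq\bigl(3+\sqrt{1+8\lambda_t}\bigr)/4<3/2$; conversely, for any fixed $\gamma_t<3/2$ one can choose $\lambda_t<1$ close enough to $1$ to satisfy both conditions (and $c_5<0$ is not binding), which is exactly the content of the corollary's caveat and feeds the strict decrease $c_8<0$ into the monotone partial-sum argument of Theorem \ref{thm:2}.
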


  \begin{figure}[t]
    \includegraphics[width=\textwidth]{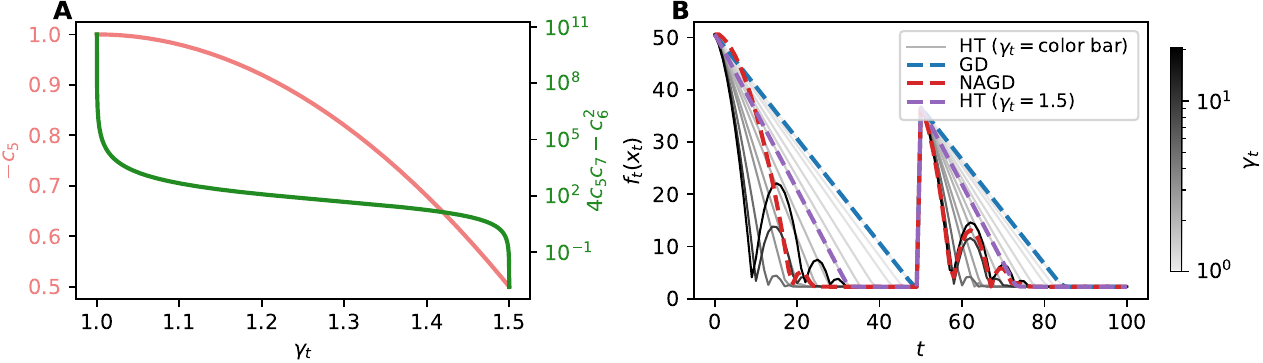}
    \centering
    \caption{\textsf{\textbf{The stability conditions outlined in Corollary \ref{cor:1} are nonvacuous.}} {\bf(A)} Visualization of the sufficient conditions outlined in Equation \eqref{eq:thm2:conditions} under the parameter settings in Equation \eqref{eq:cor1:hyper}. There is an asymptote at $\gamma_t=1.5$ for the value of $4c_5c_7-c_6^2$ (green line) {\bf (B)} Algorithms are optimizing the function $ f_t(x) = \log ( a_t \exp(-b_t (x-c_t)) + a_t \exp(b_t (x-c_t)) ) $ where $a_t=1$ and $b_t=7$. For $t<50$ $c_t=0$ and for $t>50$ $c_t=5$. This results in an optimization problem where the optimal solution $x^*$ changes from $0$ to $5$ at $t=50$. Vanilla GD is shown in the dashed blue line with Nesterov Accelerated Gradient Descent (NAGD) in dashed red. For the HT with $\gamma_t$ varied from 1 to 10 see the solid monochrome (shades of grey) lines. With $\gamma_t=1$ the optimizer coincides with GD and as $\gamma_t$ is increased the HT begins to exhibit acceleration more closely resembling NAGD (light grey to black solid line). HT with $\gamma_t=1.5$ shown in dashed purple. Full details provided in Appendix \ref{app:exp}.}
    \label{fig:cor1}
    \end{figure}

We have an entire section dedicated to simulations but we didn't want to wait until the very end to demonstrate the non-vacuous nature of our results. The simulation in Figure \ref{fig:cor1}(B) demonstrate that with $\gamma_t=1$ the HT coincides with vanilla GD. As $\gamma_t$ increases beyond 1 (with $\beta_t=1/\gamma_t$, $\mu_t=1$) the HT (empirically) demonstrates acceleration (monochrome black/white solid lines) with the purple dashed line coinciding with $\gamma_t=1.5$ matching our stability analysis in Corollary \ref{cor:1}. In the simulation we also changed the optimal point halfway through so as to have another window of transients where the algorithms have to converge to the new optimal value. As was previously stated in Remark \ref{rem:1} our stability results (bounded trajectories for arbitrary initial conditions) hold even for optimal set points that vary over time. We will have more to say about the long term behavior of the HT in Section \ref{sec:sim}.

\subsection{The smooth and strongly convex setting}\label{sec:strong_convex}
We close by showing exponential convergence for time varying strongly convex functions.
\begin{prop} \label{prop:2}
Let $f_t$ satisfy Assumptions \ref{assump:1}-\ref{assump:3} as well as the additional assumption that for each $t$, $f_t $ is $\sigma_t$--strongly convex. For the HT defined in Equation \eqref{eq:hot} with arbitrary initial conditions $\{ \vx_0,\vy_0,\vz_0\}$, $N_t \geq L_t$ ($f_t$ is $L_t$ smooth by assumption), and the same three conditions from Equation \eqref{eq:thm1:conditions} in Proposition \ref{prop:1} satisfied $\{ \beta_t  \in [0,1], \mu_t  \in [\epsilon,1], \epsilon>0 ,    \gamma_t  = \frac{\mu_t}{2} \}$. Then, $V_t = \norm{\vz_t - \vx^* }^2 + \norm{\vy_t - \vz_t}^2$ originally defined in \eqref{eq:thm1_lyap} is a Lyapunov function which exponentially converges to zero at rate
\begin{equation}\label{eq:exp}
V_{t+1}=\left(1-\omega_t\right) V_t
\end{equation}
where
\begin{subequations}\label{eq:rho_t}
    \begin{align}\label{eq:rho_t1}
        \omega_t := &\ \min \left\{\left[(1-\nu_t)(1-\bar\beta_t^2)\right] , \left[\frac{\rho_t\nu_t (1-\bar\beta_t^2)}{\rho_t\bar\beta_t^2 +\nu_t (1-\bar\beta_t^2)} \right] \right\}, \\\label{eq:rho_t2}
        \rho_t : = &\ \frac{\mu_t}{2} \frac{\sigma_t}{N_t}, \quad\text{and}\quad 
    \nu_t \in  [\epsilon,1-\epsilon].
\end{align}
\end{subequations}

\end{prop}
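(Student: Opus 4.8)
The plan is to reuse the algebra developed in the proof of Proposition~\ref{prop:1} and only diverge at the moment where strong convexity can be brought in. The crucial simplification is that the choice $\gamma_t = \tfrac12\mu_t$ forces the cross-coefficient $c_2 = 2(2\gamma_t-\mu_t)$ to vanish, so the completed-square bound collapses to $\Delta V_{t+1}\leq c_1\norm{\va_t}^2 + c_3\norm{\vb_t}^2 + c_4$ with each of $c_1,c_3,c_4\leq 0$. First I would discard the nonpositive term $c_1\norm{\va_t}^2$ and rewrite the remaining two in a form directly comparable to $V_t$. Using $\vb_t = \vx_t-\vz_t = \bar\beta_t(\vy_t-\vz_t)$ gives $c_3\norm{\vb_t}^2 = -(1-\bar\beta_t^2)\norm{\vy_t-\vz_t}^2$, while applying $\sigma_t$-strong convexity at the minimizer (where $\nabla f_t(\vx^*)=\vzero$) gives $f_t(\vx_t)-f_t(\vx^*)\geq \tfrac{\sigma_t}{2}\norm{\vx_t-\vx^*}^2$, so that $c_4 \leq -\rho_t\norm{\vx_t-\vx^*}^2$ after substituting $\gamma_t=\tfrac12\mu_t$ and $\rho_t = \tfrac{\mu_t}{2}\tfrac{\sigma_t}{N_t}$.

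At this point I would have
\[
-\Delta V_{t+1} \geq (1-\bar\beta_t^2)\norm{\vy_t-\vz_t}^2 + \rho_t\norm{\vx_t-\vx^*}^2,
\]
and the goal reduces to showing that the right-hand side is at least $\omega_t V_t$. Writing $\vp := \vz_t-\vx^*$ and $\vq := \vy_t-\vz_t$, so $V_t = \norm{\vp}^2 + \norm{\vq}^2$, the key identity is $\vx_t-\vx^* = \vp + \bar\beta_t\vq$, obtained by eliminating $\vy_t$ through \eqref{eq:hota}. The hard part is that $\norm{\vx_t-\vx^*}^2 = \norm{\vp+\bar\beta_t\vq}^2$ carries a signed cross term $2\bar\beta_t\vp^\T\vq$, so $\norm{\vx_t-\vx^*}^2$ by itself cannot be bounded below by a multiple of $V_t$; the extra $(1-\bar\beta_t^2)\norm{\vq}^2$ term is exactly what must absorb this cross term.

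The device that produces the two branches of the $\min$ defining $\omega_t$ is to split the $\norm{\vq}^2$ budget with the analysis parameter $\nu_t$, writing $(1-\bar\beta_t^2)\norm{\vq}^2 = (1-\nu_t)(1-\bar\beta_t^2)\norm{\vq}^2 + \nu_t(1-\bar\beta_t^2)\norm{\vq}^2$. The first piece alone dominates $\omega_t\norm{\vq}^2$ provided $\omega_t\leq (1-\nu_t)(1-\bar\beta_t^2)$, which is the first argument of the $\min$. I would then show that the second piece together with $\rho_t\norm{\vp+\bar\beta_t\vq}^2$ dominates $\omega_t\norm{\vp}^2$; expanding the square, this is the nonnegativity of the quadratic form in $(\vp,\vq)$ with matrix
\[
\begin{pmatrix} \rho_t-\omega_t & \rho_t\bar\beta_t \\ \rho_t\bar\beta_t & \rho_t\bar\beta_t^2 + \nu_t(1-\bar\beta_t^2) \end{pmatrix},
\]
whose leading entry and determinant are both nonnegative precisely when $\omega_t\leq \tfrac{\rho_t\nu_t(1-\bar\beta_t^2)}{\rho_t\bar\beta_t^2 + \nu_t(1-\bar\beta_t^2)}$, the second argument of the $\min$; one checks that this bound is automatically at most $\rho_t$, so the leading-entry condition $\omega_t\leq\rho_t$ is subsumed by the determinant condition.

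Adding the two dominations yields $-\Delta V_{t+1} \geq \omega_t V_t$, that is $V_{t+1}\leq (1-\omega_t)V_t$, which is the claimed geometric contraction. Finally, the constraint $\nu_t\in[\epsilon,1-\epsilon]$ is what keeps both branches of $\omega_t$ strictly positive (the factor $1-\nu_t$ in the first and the factor $\nu_t$ in the second), so that $\omega_t\in(0,1)$ and $V_t$ indeed converges to zero. I expect the matrix/determinant step to be the only genuinely delicate calculation, since everything before it is a direct specialization of Proposition~\ref{prop:1}.
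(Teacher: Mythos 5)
Your proposal is correct and follows essentially the same route as the paper's proof: the same specialization of Proposition~\ref{prop:1} with $c_2=0$, the same application of strong convexity via Lemma~\ref{lem:dist_min_strong} to get the $-\rho_t\norm{\vx_t-\vx^*}^2$ term, and the same $\nu_t$-split of the $(1-\bar\beta_t^2)\norm{\vy_t-\vz_t}^2$ budget. Your only departure is cosmetic: where the paper completes the square in the bracketed expression and drops the nonpositive squared term, you verify positive semidefiniteness of the equivalent $2\times 2$ quadratic form via its leading entry and determinant --- the same Schur-complement algebra, yielding exactly the two branches of the $\min$ defining $\omega_t$.
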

\begin{proof}
In order to demonstrate exponential convergence with the Lyapunov function it suffices to show $V_{t+1}=g_t V_t$ for some $g_t\in[-1+\epsilon,1-\epsilon]$. We will begin the proof by starting where we left off with the bound \eqref{eq:thm1:4_5} in Proposition \ref{prop:1}, expanding the coefficients and substituting $\gamma_t=\mu_t/2$  which results in 
\begin{subequations}\label{eq:prop2:start}
  \begin{align}\label{eq:prop2:start1}
  \Delta V_{t+1} \leq & \frac{\mu_t}{N_t^2}(\mu_t-1)\norm{\nabla f_t(\vx_t)}^2 + \left(1-\bar\beta_t^{-2}\right) \norm{\vx_t - \vz_t}^2 + \frac{\mu_t}{N_t} (f_t(\vx^*) - f_t(\vx_t)) \\ \label{eq:prop2:start2}
 \leq & \left(1-\bar\beta_t^{-2}\right) \norm{\vx_t - \vz_t}^2 + \frac{\mu_t}{N_t} (f_t(\vx^*) - f_t(\vx_t))
  \end{align}
  \end{subequations}
  where we dropped the first term in \eqref{eq:prop2:start1} for the second inequality in \eqref{eq:prop2:start2} because we typically want the learning rate $\mu_t\in[\epsilon, 1]$ to be as large as possible and the first expression in \eqref{eq:prop2:start1} approaches 0 as $\mu_t$ tends to 1. Next we will use the strong convexity assumption along with the distance to the minimizer bound provided in Lemma \ref{lem:dist_min_strong} for $f_t(\vx^*) - f_t(\vx_t)$ and we get
  \begin{equation}
  \Delta V_{t+1} \leq  \left(1-\bar\beta_t^{-2}\right) \norm{\vx_t - \vz_t}^2 -  \rho_t \norm{\vx_t-\vx^*}^2
  \end{equation}
  where we recall the definition of $\rho_t$ from \eqref{eq:rho_t}.
  The next step is to obtain bounds in terms $ \norm{\vz_t - \vx^* }$ and $\norm{\vy_t - \vz_t}$ with the hopes that the upper-bound can be expressed in terms $V_t$ thus allowing for some kind of bound of the form $V_{t+1}=g_t V_t$ as we previously stated. We accomplish this by adding and subtracting $\vz_t$ within the norm $\norm{\vx_{t} - \vx^*}$ and then using the fact that $\vx_{t} - \vz_t = \bar\beta_t(\vy_t -\vz_t)$ from \eqref{eq:hota} we obtain
\begin{equation}\label{eq:prop2:complex}
    \Delta V_{t+1} \leq \bigl(  \bar\beta_t^2  - 1 \bigr) \norm{\vy_t -\vz_t}^2 - \rho_t  \bigl( \bar \beta_t^2\norm{\vy_t -\vz_t}^2 + 2\bar \beta_t(\vy_t -\vz_t)^\T(\vz_t -\vx^*) + \norm{\vz_t -\vx^*}^2\bigr).
\end{equation}
As in our prior analysis we define intermediate vectors and scalars to simplify the expressions with
\begin{align}
    \vc_t:=\vy_t-\vz_t \quad\quad\quad
    \vd_t:=\vz_t-\vx^* \quad\quad\quad
    \psi_t^2 :=\bar\beta_t^2-1 \quad \text{(note that $\psi_t\leq 0$) }
\end{align}
and rewrite the inequality in \eqref{eq:prop2:complex} giving us the more compact inequality
\begin{equation*}
    \Delta V_{t+1} \leq \psi_t^2\norm{\vc_t}^2 - \rho_t  \left(\bar \beta_t^2\norm{\vc_t}^2 + 2\bar \beta_t\vc_t^\T\vd_t + \norm{\vd_t}^2\right).
\end{equation*}
We want to have an expression that is negative semidefinite in terms of $\norm\vc_t$ and $\norm\vd_t$ which we have for $\norm\vc_t$, but not yet for $\norm\vd_t$. To achieve this we borrow ${\nu_t\in[\epsilon,1-\epsilon]}$ from the expression $ \psi_t^2 \norm{\vc_t}$ and incorporate it into the large expression in the parenthesis giving us
\begin{equation}\label{eq:prop2_bracket}
    \Delta V_{t+1} \leq (1-\nu_t)\psi_t^2\norm{\vc_t}^2 + \left[ \left(- \rho_t  \bar \beta_t^2 + \nu_t \psi_t^2\right)\norm{\vc_t}^2 -2\rho_t \bar \beta_t\vc_t^\T\vd_t -\rho_t \norm{\vd_t}^2\right].
\end{equation}
Completing the square for the component in the brackets we have
\begin{equation}\label{eq:prop2_bracket_complete}
 \left( - \rho_t  \bar \beta_t^2 + \nu_t\psi_t^2 \right) \norm{\vc_t+\frac{-2\rho_t\bar\beta_t}{2\left(- \rho_t  \bar \beta_t^2 + \nu_t\psi_t^2\right)}\vd_t}^2  +\left( -\rho_t- \frac{\rho_t^2\bar\beta_t^2}{\left(- \rho_t  \bar \beta_t^2 + \nu_t\psi_t^2\right)}\right) \norm{\vd_t}^2.
\end{equation}
Taking the second component from \eqref{eq:prop2_bracket_complete} that is solely a function of $\norm{\vd_t}^2$ and using that to bound the expression in the brackets of Equation \eqref{eq:prop2_bracket} we obtain
\begin{equation}\label{eq:exp_proof}
  \Delta V_{t+1} \leq -(1-\nu_t) (1-\bar\beta_t^2)\norm{\vc_t}^2 - \frac{\rho_t\nu_t (1-\bar\beta_t^2)}{\rho_t\bar\beta_t^2 +\nu_t (1-\bar\beta_t^2)} \norm{\vd_t}^2
\end{equation}
Finally, we recognize from the definition of $\vc_t$ and $\vd_t$ that $V_t=\norm{\vc_t}^2+\norm{\vd_t}^2$, and from Equation \eqref{eq:exp_proof} with a few terms rearranged we arrive at the bound provided in Equation \eqref{eq:exp}.
  \end{proof}

\subsection{Time varying optimal points}\label{sec:tv}

In Remark \ref{rem:1} we eluded to the fact that while we assume a fixed optimal point $\vx^*$, for the purposes of proving asymptotic convergence results, we can actually allow for time varying  $\vx_t^*$ and we still remain bounded. Specifically, for our setting one can prove that for any piecewise constant $\vx_t^*$ with only a finite number of changes the HT remains bounded with $f_t(\vx_t)$ converging to each new optimal point after it is changed. This holds because in our proof setting we make no assumptions about the initial conditions of the parameters or any other a priori assumptions about their boundedness. We can make this a little more concrete with the following thought experiment.

Consider our problem setting with arbitrary initial conditions $\vx_0,\vy_0,\vz_0$ and an optimal $\vx^*_{[1]}$ to begin with. Our proofs say that we are stable and $f_t(\vx_t)$ will asymptotically converge to $f_t(\vx^*_{[1]})$. Now at some time $T$ we get a new optimal $\vx^*_{[2]}$ (with the subscript $[2]$). Importantly we do nothing to the algorithm and it just continues to run as originally defined. For analysis we now say $\vx_T,\vy_T,\vz_T$ are our new initial conditions. Recall that our proof explicitly states that we are stable for arbitrary initial conditions and so we will still be bounded for all time with the new $\vx^*_{[2]}$ and $f_t(\vx_t)$ will now asymptotically converge to the new $f_t(\vx^*_{[2]})$ as well. Using this line of reasoning the following meta-theorem holds.

\begin{thm} \label{thm:time_varying}Let $f_t$ satisfy Assumptions \ref{assump:1} and \ref{assump:3} where there exists a piecewise constant $\vx_t^*$ that changes in value only a finite number of times. For the HT defined in Equation \eqref{eq:hot} with arbitrary initial conditions $\{ \vx_0,\vy_0,\vz_0\}$, $N_t \geq L_t$ ($f_t$ is $L_t$ smooth by assumption), and the conditions on the hyper-parameters and further assumptions as outlined in either Propositions \ref{prop:1} and \ref{prop:2}
or Theorem \ref{thm:2}, the parameters of the HT remain bounded for all time. Furthermore after each successive change in the optimal point, $f_t(\vx_t)$ will begin to asymptotically converge (or $\vx_t$ will begin to exponentially converge in the case of strongly convex) to each new optimal function value (to each new optimal point).
\end{thm}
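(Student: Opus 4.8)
The plan is to exploit the single most important structural feature of the preceding results: the conclusions of Proposition~\ref{prop:1}, Proposition~\ref{prop:2}, and Theorem~\ref{thm:2} all hold for \emph{arbitrary} initial conditions $\{\vx_0,\vy_0,\vz_0\}$, with no a priori boundedness assumed. I would first fix notation for the switching structure. Since $\vx_t^*$ is piecewise constant with finitely many changes, let $0=T_0<T_1<\dots<T_K<\infty$ be the change times, so that on each interval $[T_k,T_{k+1})$ (with $T_{K+1}:=\infty$) the optimal point equals a fixed value $\vx^*_{[k+1]}$, and Assumption~\ref{assump:2} holds \emph{restricted to that interval}. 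The remaining hypotheses (Assumptions~\ref{assump:1} and~\ref{assump:3}, together with the hyper-parameter conditions) are imposed pointwise in $t$ and therefore carry over unchanged across switches. The proof is then an induction on $k$.

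For the base case, on $[T_0,T_1)$ the HT runs from $\{\vx_0,\vy_0,\vz_0\}$ against the fixed target $\vx^*_{[1]}$; the relevant result shows that the associated Lyapunov candidate (e.g.\ $V_t$ built with $\vx^*_{[1]}$) is non-increasing and that all signals $\vx_t,\vy_t,\vz_t$ are bounded on this interval, so in particular $\{\vx_{T_1},\vy_{T_1},\vz_{T_1}\}$ are finite. For the inductive step, at each switch time $T_k$ I would simply \emph{reset the clock}: declare $\{\vx_{T_k},\vy_{T_k},\vz_{T_k}\}$ (finite by the inductive hypothesis) to be the new initial conditions and apply the relevant proposition/theorem verbatim with target $\vx^*_{[k+1]}$. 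Because those results tolerate arbitrary initial data, boundedness propagates from one interval to the next. A finite union of bounded segments is bounded, so after the last switch all states are uniformly bounded for all time.

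The subtlety I would treat carefully is that the Lyapunov function itself is \emph{not} continuous across a switch: because $V_t$ (or $W_t$) is measured relative to the current optimum, replacing $\vx^*_{[k]}$ by $\vx^*_{[k+1]}$ causes a jump in its value. The point to make explicit is that this jump is finite---the states at $T_k$ are bounded and each $\vx^*_{[k+1]}$ is a fixed (hence finite) vector---so the Lyapunov candidate merely restarts from a finite level and is again monotonically non-increasing on the next interval, and boundedness is never lost. Finally, for the convergence claim one must distinguish finite from semi-infinite intervals: the limit argument in Proposition~\ref{prop:1}/Theorem~\ref{thm:2} (summing $\Delta V_{t+1}$ and invoking the bounded-monotone-sequence argument used at the end of Proposition~\ref{prop:1}) requires an infinite horizon, so a genuine limit is only obtained on the terminal interval $[T_K,\infty)$, on which $\vx_t^*\equiv\vx^*_{[K+1]}$ is fixed forever; there the asymptotic convergence $f_t(\vx_t)-f_t(\vx^*_{[K+1]})\to0$ (respectively the exponential rate \eqref{eq:exp} in the strongly convex case via Proposition~\ref{prop:2}) follows directly. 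On each intermediate interval the monotone decrease of the re-initialized Lyapunov candidate shows the iterate makes progress toward the new optimum, which is the precise meaning of ``begins to converge.'' The main obstacle is thus not any new estimate but the bookkeeping across switches---ensuring the finite re-initialization of the Lyapunov candidate, and being explicit that genuine convergence (as opposed to mere boundedness) is a property of the final, non-switching phase.
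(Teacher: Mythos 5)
Your proposal is correct and follows essentially the same route as the paper: the paper presents this as a ``meta-theorem'' justified by exactly your argument---at each switch time, treat the current states $\{\vx_T,\vy_T,\vz_T\}$ as fresh arbitrary initial conditions and re-invoke Proposition~\ref{prop:1}, Proposition~\ref{prop:2}, or Theorem~\ref{thm:2}, whose conclusions hold for arbitrary initial data. Your additional bookkeeping (the finite jump of the Lyapunov candidate at a switch, and the distinction between genuine asymptotic convergence on the terminal semi-infinite interval versus mere ``progress'' on intermediate ones) makes explicit what the paper leaves informal, but it is the same proof.
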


One can go further and analyze continuously varying $\vx^*_t$, for instance if $\vx^*_t$ is ``slowly varying'' (the quantity $\vx^*_{t+1}-\vx^*_t$ is uniformly bounded) and the style of analysis presented here can be adapted to this setting as well. However, some kind of robustness modification will be needed (e.g. the use of the projection operator or leaky integration, which is called the ``sigma'' modification in the adaptive control literature \cite{tsakalis1987adaptive}).  One can no longer prove convergence to the optimal functions or optimal point themselves, but instead, compact regions around them \cite[\S8.6]{narendra2012stable}.  We note that these notions of stability never enters one's mind when studying gradient algorithms with tools like regret analysis because you a priori assume all the parameters and gradients are uniformly bounded before you even start the analysis, see \S\ref{sec:lit_online}.

\section{Connections to existing literature}
\label{sec:con_lit_lim}

\subsection{High-order tuner}\label{lit_ht}
The HT in \eqref{eq:hot} was directly inspired by \citet{gaudio2021accelerated} which tried to connect stability proof techniques for second order gradient flow methods (continuous time) to second order gradient descent methods (discrete time). The second order gradient flow algorithm under study in \citet{gaudio2021accelerated} was 
$ \label{eq:hot_cts1}
  \ddot \vx(t) + \beta \dot \vx(t) = -\frac{\gamma\beta}{N_t} \nabla f_t(\vx(t))
$
which the authors recognized could be reparameterized as 
\begin{subequations} \label{eq:hot_cts2}
\begin{align}
    \label{eq:cts_hota} \dot \vz(t) &= -\frac{\gamma}{N_t} \nabla f_t(\vx(t))  \\ 
    \label{eq:cts_hotb} \dot \vx(t) &= - \beta (\vx(t) - \vz(t))
\end{align}
\end{subequations}
where $N_t$ varies over time and upper bounds the smoothness of $f_t$ at each $t$. Reparameteriztion to the form in Equation \eqref{eq:hot_cts2} was essential for discovery of the Lyapunov candidate
\begin{equation}\label{eq:Vcts}
  V(t) = \frac{1}{\gamma} \left(\norm{\vz(t)- \vx^*}^2 + \norm{\vz(t)-\vx(t)}^2\right).
\end{equation}
What was also shown in \citet{gaudio2021accelerated} was that the discretization of Equation \eqref{eq:hot_cts2} to
\begin{subequations} \label{eq:hot_g2}
  \begin{align}
\label{eq:hot_g2_a}\vx_{t} &= \beta \vz_t + (1 - \beta)\vy_t  \\
    \label{eq:hot_g2_b}\vy_{t+1} &= \vx_{t} - \tfrac{\gamma \beta}{N_t}  \nabla f_{t+1}(\vx_{t})  \\
    \label{eq:hot_g2_c}\vz_{t+1} &= \vz_t - \tfrac{\gamma}{N_t}\nabla f_t(\vx_{t}) 
  \end{align}
  \end{subequations}
  was also stable and could use the same Lyapunov function (but in discrete time  $V_t = \frac{1}{\gamma} \norm{\vz_t- \vx^*}^2 + \frac{1}{\gamma}\norm{\vz_t-\vx_t}^2 $).
For quadratic costs and the HT in \eqref{eq:hot_g2} they established stability with the following constraints on their free design parameters from \cite[Theorem 4]{gaudio2021accelerated}
  \begin{equation}\label{eq:g_thm4}
     \beta\in(0,1) \quad \gamma\in\left(0,\tfrac{\beta(2-\beta)}{16+\beta^2}\right]   
  \end{equation}
  and for quadratic costs that are $\sigma$-strongly convex a similar but even more conservative bound was established \cite[Theorem 5]{gaudio2021accelerated}. \citet{moreu_stable_convex} used the same algorithm and analysis to provide similarly conservative bounds for more general convex functions \cite[Theorems 1,2]{moreu_stable_convex}. These overly conservative constraints do not allow $\gamma_t$ to exceed 1 (or even approach 1) which is necessary to achieve a performance improvement over vanilla GD (Corollary \ref{fig:cor1} and Figure \ref{fig:cor1}).

  There are several differences provided in this work compared to the prior art \citep{gaudio2021accelerated,moreu_stable_convex}. First we note that in our analysis, the free design parameters $\gamma_t$ and $\beta_t$  (Equations \eqref{eq:hot} and \eqref{eq:hyperparameters}) are allowed to be time varying in comparison to $\gamma,\beta$ in \eqref{eq:hot_g2}. We also added an extra degree of freedom ($\alpha_t,\mu_t$) in the gradient step size for the update $\vy_{t+1}$ (compare \eqref{eq:hot_g2_b} to \eqref{eq:hotb} and \eqref{eq:hyperparameters}). Our analysis (even for the simplified constraints in Proposition 1 and Corollary 1.1) allow more flexibility in the choice of $\gamma_t$ and $\beta_t$ (compare Equations \eqref{eq:thm1:conditions}, \eqref{eq:thm2:conditions}, \eqref{eq:cor1:hyper} to 
  \eqref{eq:g_thm4}). We also used a different Lyapunov candidate throughout, not scaling by $\gamma_t$ and instead of using $\norm{\vz_t - \vx_t}$ we used $\norm{\vz_t - \vy_t}$. In their work both of these differences are moot because $\gamma_t,\beta_t$ are in fact treated as constants so we can easily map between the Lypaunov functions. In our work they are not constants and these subtle changes to the Lyapunov candidate have non-trivial impacts on the analysis. We also simplified the discrete HT, not requiring one to pass the state $\vx_t$ through gradients for $f$ at two different time points ($\nabla f_{t+1}(\vx_t)$ and $\nabla f_{t}(\vx_t)$ in \eqref{eq:hot_g2} as compared to just  $\nabla f_{t}(\vx_t)$ in \eqref{eq:hot}). Finally, for the strongly convex setting we establish exponential convergence of the Lyapunov function to zero, where as prior art established exponential convergence to a compact set (compare Proposition \ref{prop:2} to \cite[Theorem 5]{gaudio2021accelerated} and \cite[Theorem 2]{moreu_stable_convex}).

\subsection{Accelerated methods (classical)}\label{sec:acceleration}
While the structure of our second order method is nearly identical to the setting one has when studying accelerated methods, there are some distinct differences. The first and foremost is that in our setting, we are assuming that the objective function is changing over time, our cost functions are subscripted with $t$ explicitly denoting their time variation, Assumption \ref{assump:1}. If we set $f_t:=f$ and set our hyper-parameters as%
   \begin{align} \label{eq:nesterov_hyper}
\mu_t  =   1 
\quad\quad
\gamma_t  = \frac{t}{2} 
\quad\quad 
\beta_t = \frac{2}{t+1} 
\end{align}%
then we exactly recover Nesterov form II \citep{acceleration_ppm_sra_ahn,nesterov2018lectures}. The biggest difference between these parameters and ours is that in order for us to establish stability the hyper-parameter $\gamma_t$ can not grow unbounded and in our case could reach a maximum of 2 (compare \eqref{eq:cor1:hyper_a} to \eqref{eq:nesterov_hyper}). For any stability based approach with time varying $f_t$, it does not seem possible for a component of the algorithm to have a learning rate that scales beyond $\mathcal O(1/N_t)$ as in the case for Nesterov. One of the issues is that the state $\vz_t$ does indeed grow unbounded in Nesterov. This however is not an issue when implementing Nesterov as the parameter one cares about is actually the component of $\vz_t$ that contributes to $\vx_t$, which from \eqref{eq:hota} we see is $\beta_t \vz_t$. To that end we tried to analyze the equivalent dynamics with 
\begin{subequations} \label{eq:hot_nes}
  \begin{align}
    \label{eq:hot_nesa}\tilde \vx_{t} &= \tilde \vz_t + (1 - \beta_t)\tilde \vy_t  \\
    \label{eq:hot_nesb}\tilde \vy_{t+1} &= \tilde\vx_t -  \alpha_t  \nabla f_t(\tilde \vx_t)  \\
    \label{eq:hot_nesc}\tilde\vz_{t+1} &= \frac{\beta_{t+1}}{\beta_t}\tilde\vz_t - \beta_{t+1}\eta_t\nabla f_t(\tilde\vx_{t}) 
  \end{align}
  \end{subequations}
where $\tilde \vz_t := \beta_t \vz_t$ and the other variables are unchanged but just redefined as $\tilde \vx_t = \vx_t$ and $\tilde \vy_t=\vy_t$. So now the effective learning rate associated with the state $\tilde z_t$ would be ${\beta_{t+1}\eta_t=\frac{t}{t+2}\frac{1}{N_t}}$
which is indeed proportional to $1/N_t$, but we were unsuccessful with this approach.

Others have pointed out stability issues with Nesterov's method for smooth convex functions, namely \citet{attia2021nips}. Much of the prior work has focused on different definitions of stability and constructing careful examples for a single function $f$ with desired properties. In our fully time varying setting however no such careful construction is necessary and the building blocks of the "gadget" function in \citet{attia2021nips} can be used directly as a counter example against stability for our algorithm with hyper-parameters as they are defined in \eqref{eq:hot_nes} when applied to time varying cost functions. Furthermore, while Nesterov's method does indeed have some stability guarantees for fixed quadratic cost functions \citep{chen2018arxiv,lessard2016analysis}, we conjecture that those guarantees vanish when the quadratic cost functions vary over time.




\subsection{Online learning and stochastic gradient descent}\label{sec:lit_online}
Our general setting of a cost function changing at each iteration also naturally arises in the setting of online learning and various flavors of stochastic (or sub-gradient) descent where for a fixed cost function the data arrives in a time varying fashion \citep{shai-shalev2012online-learning,hardt2016train}. We can make this explicit by assuming we have a set of data $\mathcal D$ and at time $t$ we use a piece of that data $\mD_t \in\mathcal D$ to define our time varying cost function $f_t(\cdot):= f(\cdot\ ,\mD_t)$ from the fixed but data dependent cost $f$.

When addressing these problems in the online learning setting Regret Analysis (RA) is used and this requires the learning rates to asymptotically decrease over time \citep{shai-shalev2012online-learning}.\footnote{Adaptive regret analysis \citep{hazan2009efficient} looks to overcome some of the drawbacks mentioned here but we are not aware of their use in the analysis of gradient methods with second order dynamics.} This is accomplished either through explicitly reducing the learning rates over time or implicitly via scaling the learning by the accumulated sub-gradients (e.g. Adagrad \citep{duchi2011adagrad} and Adam \citep{kingma2017adam}). Another major difference is in the initial problem setup and what assumptions are made. In the analysis of Adam for instance most signals and gradients in the algorithm are \emph{a priori} assumed to be bounded\footnote{``Assume that the function $f_t$ has bounded gradients, $\|\nabla f_t(\theta)\|_2 \le G $, $\|\nabla f_t(\theta)\|_\infty \le G_\infty $ for all $\theta \in R^d$ and distance between any $\theta_t$ generated by Adam is bounded, $\|\theta_n - \theta_m\|_2 \le D $, $\|\theta_{m} - \theta_{n}\|_\infty \le D_\infty$ for any $m,n \in \{1,...,T\}$)'' -- \citet[page 4, Theorem 4.1]{kingma2017adam}} with projection operators also extensively used in RA.

There are several recent attempts to address the stability of Stochastic Gradient Descent with Momentum (SGDM) for general smooth convex functions \citep{liu2020improved,yan2018unified}. That work is more in line with the analysis presented here. One key difference, however, is in the choice of Lyapunov candidate. In SGDM (and in the analysis of Nesterov's accelerated method) a  quantity like $\norm{f_t(\vy_t) - f_t(\vx^*) }$ is directly bounded or included in the Lyapunov candidate, where in our work, the function $f$ does not appear in the Lyapunov candidate (compare \cite[Equation (5)]{liu2020improved},  \cite[Equation (4.10)]{acceleration_ppm_sra_ahn}, \cite[Equation (10)]{wilson2021lyapunov}, or \cite[Equation (5.5)]{bansal2019potential}  to Equation \eqref{eq:thm1_lyap} in this work) for the reasons we previously discussed in Remark \ref{rem:2}. It instead falls out in the difference of the Lyapunov candidate, see Equation \eqref{eq:thm1:5}. This subtle difference likely has non-trivial implications. One exciting direction forward would be to possibly combine the Lyapunov candidate in this work with the stochastic analysis by \citet{liu2020improved}.

\section{Simulation experiments}\label{sec:sim}

With these simulations we have two goals. The first is to demonstrate what can happen to traditional vanilla GD and NAGD when they are applied to time varying cost functions. The second is to demonstrate the long term behavior of HT in comparison to methods like Adam and Adagrad, which were designed via regret analysis. For all of our experiments, including the ones we already discussed, the following function is being optimized%
\begin{equation}\label{eq:cost}
  f_t(x) = \log \left( a_t \exp(-b_t (x-c_t)) + a_t \exp(b_t (x-c_t)) \right).
\end{equation}
Throughout the experiments we are comparing the HT with hyper-parameters $\gamma_t=1.5$, $\mu_t=1$, and $\beta_t=1/1.5$ (Corollary \ref{cor:1}) to three different groups of methods, GD and NAGD, Time-varying Normalized (TN)-GD and TN-NAGD, and Adagrad and Adam. For completeness we list their details as well as hyper-parameter settings below. For complete details on these experiments see Section \ref{app:exp}
\begin{itemize}
  \item GD and NAGD (Figures \ref{fig:cor1} and \ref{fig:2}A)
  \begin{itemize}
      \item GD : $x_{t+1} = \vx_t - \frac{1}{N} \nabla f_t(\vx_t)$, (normalization parameter $N$ is constant).
    \item NAGD: Dynamics in \eqref{eq:hot} with parameters chosen as in  \eqref{eq:nesterov_hyper}  with $\mu_t  =   1 $, $\gamma_t  = \frac{t}{2}$, $\beta_t = \frac{2}{t+1}$ (normalization parameter $N_t=N$ is constant).
  \end{itemize}
  \item TN-GD and TN-NAGD (Figure \ref{fig:2}B)
  \begin{itemize}
      \item TN-GD : $x_{t+1} = \vx_t - \frac{1}{N_t} \nabla f_t(\vx_t)$, (normalization parameter $N_t$ changes over time).
    \item TN-NAGD: Dynamics in \eqref{eq:hot} with parameters chosen as in  \eqref{eq:nesterov_hyper}  with $\mu_t  =   1 $, $\gamma_t  = \frac{t}{2}$, $\beta_t = \frac{2}{t+1}$ (normalization parameter $N_t$ can now vary over time).
  \end{itemize}
  \item Adagrad and Adam (Figure \ref{fig:3})
  \begin{itemize}
      \item Adagrad: $x_{t+1} =\vx_t - \alpha \frac{\nabla f_t(\vx_t)}{\sqrt{\sum \nabla f_\tau (x_\tau)^2}+\epsilon}$, $\alpha$ is a constant
    \item Adam: $x_{t+1} =\vx_t - \alpha_t \frac{\hat m_t}{\sqrt{\hat v_t}+\epsilon}$, $\alpha_t = \frac{\alpha}{\sqrt{t}}$
  \end{itemize}
\end{itemize}

In Figure \ref{fig:2}A we are simply showing what happens with vanilla GD and NAGD when an unanticipated change occurs in the smoothness of the function that is being optimized. If the smoothness of the function becomes large relative to the normalization constant in GD and NAGD then instability can occur. There is a trivial and obvious fix to this problem. One can simply replace the fixed value of $N$ in the denominator of the learning rates of GD and NAGD with a time varying parameter $N_t$ resulting in TN-GD and TN-NAGD, see Figure \ref{fig:2}B. This is an obvious thing one can do and for TN-GD it can actually be shown that this is still a stable algorithm. {\em It is not at all obvious however that one can simply adjust hyper-parameters on the fly for NAGD or any second order gradient descent algorithm for that matter}. This was one of the main motivations for this work. With the HT we can adjust $N_t$ on the fly and we are provably stable.
\begin{figure}[t]
  \includegraphics[width=\textwidth]{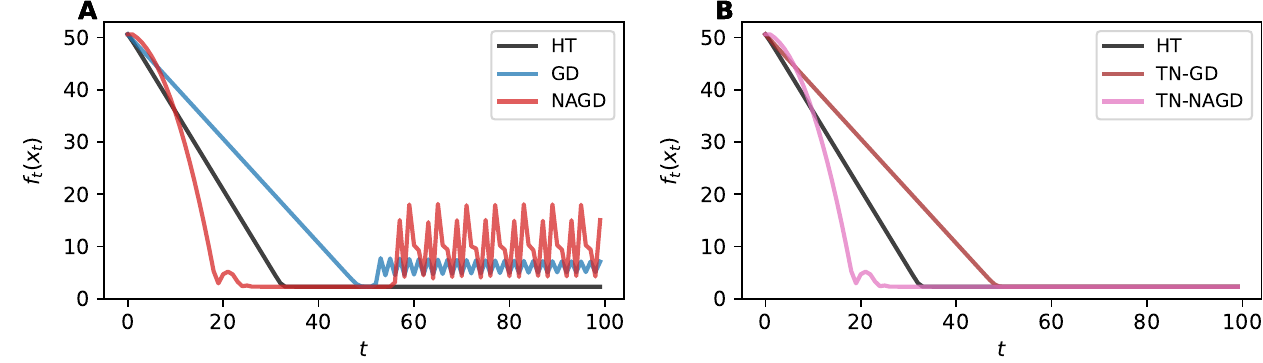}
  \centering
  \caption{\textbf{\textsf{An abrupt change in the smoothness of the objective function can cause instability.}} At $t=50$ the parameter $b_t$ in \eqref{eq:cost} is changed from $7$ to $14$. \textbf{\textsf{(A)}} The normalizer for the learning rate, $N$, is designed for $b_t=7$ and when $b_t=14$ the normalizer is too small and GD and NAGD become unstabe. The HT is provably stable with time varying $N_t$ and this parameter is adjusted to accommodate for the change in the smoothness of the objective function. \textbf{\textsf{(B)}} When the classic algorithms are given a time-varying normalizer they do not demonstrate the same poor performance. TN-GD is provably stable, but there is not such theory for TN-NAGD. HT is the same in both (A) and (B).}  \label{fig:2}
  \end{figure}

The HT is designed to run continuously. As previously discussed in the introduction, while the name `online learning' might suggest learning continuously, it does not in fact mean this.  We compared the HT to two prominent methods Adam and Adagrad that were designed via regret analysis. In Figure \ref{fig:3} the HT, Adam, and Adagrad are compared where at three different times throughout the experiment the optimal point in the objective function is changed (this is performed by changing $c_t$ in \eqref{eq:cost}). In this experiment the learning rate $\alpha$ for Adam and Adagrad was manually tuned for the best performance over the first 50 time steps of the experiment. One can see that indeed the yellow and green lines converge much faster to the minimum over the first 50 time steps. However, with each successive change in the optimal point of the objective function the performance of Adam and Adagrad degrades. This kind of behavior is expected from any algorithm designed via regret analysis as the learning rates have to asymptotically decay.

We have not yet explicitly defined regret, so lets do that here.
\begin{equation}\label{eq:regret}
  \texttt{Regret}(T) := \sum_{t=1}^T f_t(\vx_t) - \sum_{t=1}^T f_t(\bar \vx_T)\quad \text{where}\quad \bar \vx_T = \argmin_{\vx} \sum_{t=1}^T f_t(\vx).
\end{equation}
With regret the accumulated cost up to time $T$ is always compared to the accumulated cost with the best fixed cost in hindsight $\bar \vx_T$. In Figure \ref{fig:3}(B) you can compare the pointwise in time optimal solution ($\vx_t^*$, grey-dashed line) to the regret optimal solution ($\bar \vx_T$, pink). The regret optimal solution spend much of its time near $0$ because it is trying to figure out the best fixed optimal value up until that time. Inspecting Figure \ref{fig:3}(C) we see that all the methods have average costs that lower than the average cost with the regret optimal solution. In Figure \ref{fig:3}(D) we plotted the methods average regret over time (the values from panel (C) subtracted from the pink line) as well as the lower bound where regret is calculated for the pointwise in time optimal solution.

\begin{figure}[t]
  \includegraphics[width=\textwidth]{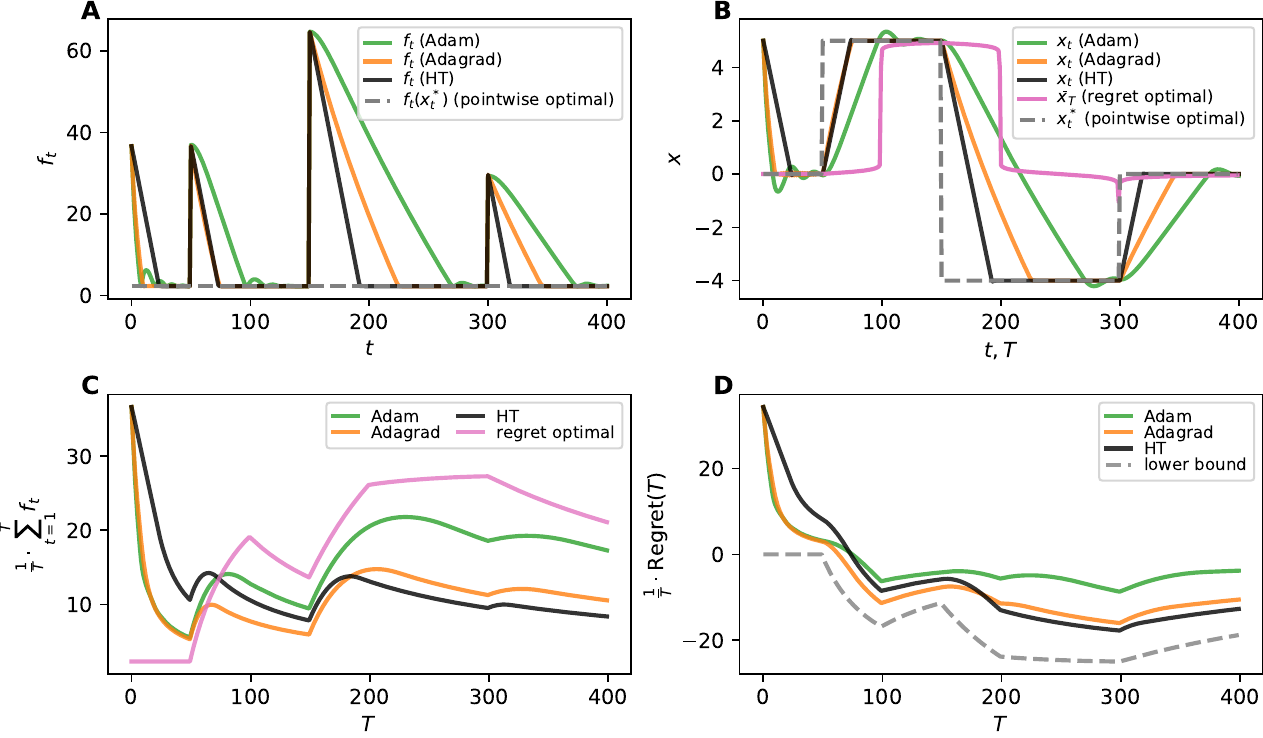}
  \centering
  \caption{\textbf{\textsf{Methods designed via regret analysis have degraded performance over time.}} The parameter that controls the optimal solution $c_t$ in \eqref{eq:cost} is initially at $0$, is changed to $5$ at $t=50$, is changed to $-4$ at $t=150$, and then is returned to $0$ at $t=300$. \textbf{\textsf{(A)}} The objective function $f_t$ over time. One can see that the performance of Adam and Adagrad degrades over time. \textbf{\textsf{(B)}} The estimates $x_t$ from the optimizers along with the regret optimal and pointwise optimal solutions. \textbf{\textsf{(C)}} Average cost over time. \textbf{\textsf{(D)}} Average regret over time. }\label{fig:3}
  \end{figure}

  \section{What does acceleration look like with time varying cost functions?}\label{sec:what_acceleration}

  We haven't discussed acceleration with any meaningful depth yet. We will cut to the chase --- {\bf\em barring extra assumptions there really is no such thing as acceleration when one allows for time varying cost functions}. You can have a time-varying cost function with a time-invariant optimal solution (Assumption \ref{assump:2}) for which all gradient based optimization technique will have arbitrarily poor performance. By arbitrary performance here we mean that the convergence time can be extended arbitrarily far into the future while still satisfying Assumption \ref{assump:2}. We now discuss one such example.
  
  \begin{example}\label{example:1}
  Consider the following 2-dimensional streaming regression problem 
  \begin{equation}
      f_t(\vx_t)=(1-\mD_t^\T \vx_t)^2
  \end{equation}
  where $\mD_t \in \sR^2$ is the time varying data vector that only takes on two values (depending on the time $t$)
  \begin{equation}
  \mD_t = \begin{cases}
    [1\ \ 0]^\T, & \text{for}\ t<\tau\\
    [0\ \ 1]^\T, & \text{otherwise}.
  \end{cases}
  \end{equation}
  \end{example}
  The only fixed value that is optimal for all time in Example \ref{example:1} is $\vx^*=[1 \ \ 1]^\T$. Any algorithm (even an oracle with access to $\mD_t$) that proposes updates at time $t$ given information at $t-1$ will have instantaneously poor performance at time $t=\tau$. Even though a fixed optimal solution exists, no algorithm can see this change coming. We can make this a little more tangible within the context of gradients by explicitly writing them down
  
  \begin{equation}\label{eq:ex1:grad}
    \nabla f_t(\vx_t)= 
      \begin{cases}
        \begin{bmatrix}-2(1-x_{1,t}) \\ 0 \end{bmatrix}, & \text{for}\ t < \tau \\ \\
        \begin{bmatrix}0 \\ -2(1-x_{2,t}) \end{bmatrix} ,  & \text{otherwise}
    \end{cases} 
  \end{equation}
  where we have used the following subscript notation
  \begin{equation*}\label{eq:ex1:xt}
    \vx_t = [x_{1,t} \ \ x_{2,t}]^\T.
  \end{equation*}
  Using the gradients in Equation \eqref{eq:ex1:grad} to update $\vx_t$ means that before $t=\tau$ only the first entry in $\vx_t$ will be updated and the second entry in $\vx_t$ will simply remain at its initial condition. Then at time $t=\tau$ when we calculate the cost $f_t$ there will be an instantaneous change that will no longer take into account all the learning that has occurred through the successive updates in $x_{1,t}$ as the new cost function from that point forward will only be a function of $x_{2,t}$. So if I want to arbitrarily increase the time to convergence in this setting one simply needs to increase $\tau$. This example is rather silly but demonstrates why convergence rates in the setting of time varying (convex) cost functions isn't really a thing that is studied without extra constraints (beyond convexity) being imposed on $f_t$.

\section{Conclusions}\label{sec:discussion}

The advantage of our approach is a natural setting for situations where learning is performed in real-time for arbitrary amounts of time. In our setting the learning rates do not need to asymptotically decrease over time. In fact for stability to hold the learning rates necessarily have to be bounded away from 0, see Equations \eqref{eq:thm1_conditions_b} and \eqref{eq:thm1_conditions_c}. In addition, the structure of our Lyapunov candidates mirror those used in control, and could likely be extended to settings where there is feedback with a dynamical system \citep{gaudio2019connections}. We recognize that no method or technique is a panacea. There are scenarios where the optimality conferred by Regret Analysis will be preferred, but we also think there are going to be growing applications in real-time scenarios where stability will be paramount.
Ultimately, we simply hope that these methods might be useful in providing further insight into gradient descent with second order dynamics for time varying objective functions as well as {\bf\em stochastic} gradient descent with second order dynamics as well. We are particularly interested in trying to extend these results to the stochastic setting and the accompanying relaxation of Assumption \ref{assump:2}.


\clearpage

\subsubsection*{Code Availability}
Code to reproduce the figures is available at \url{https://github.com/gibsonlab/stability_gd_2_tv}

\subsubsection*{Author Contributions}

Following the CRediT taxonomy standard \url{https://credit.niso.org}
\begin{itemize}
\item[TEG:] 
Conceptualization,
Formal analysis,
Funding acquisition,
Investigation,
Methodology,
Project administration,
Resources,
Software,
Supervision,
Validation,
Visualization,
Writing – original draft,
Writing – review \& editing
 \item[SA:] 
Formal analysis,
Investigation,
Methodology,
Software,
Validation,
Visualization,
Writing – review \& editing
\item[AP:]
Validation,
Writing – review \& editing
\item[JEG:]Conceptualization,
Validation,
Writing – review \& editing
\item[AMA:]Conceptualization,
Funding acquisition,
Supervision,
Writing – review \& editing
\end{itemize}

\subsubsection*{Acknowledgments}
This work was supported by the Boeing Strategic University Initiative (Annaswamy) and NIH R35GM143056 (Gibson)

\clearpage
\bibliography{tmlr}
\bibliographystyle{tmlr}

\clearpage
\appendix

\setcounter{thm}{0}
\setcounter{equation}{0}
\setcounter{figure}{0}

\renewcommand\thethm{\thesection.\arabic{thm}}
\renewcommand\theequation{\thesection.\arabic{equation}}
\renewcommand\thefigure{\thesection.\arabic{figure}}

\section{Technical Lemmas and Corollaries}
These technical lemmas can be found in most standard texts on convex optimization \citep{boyd2004convex,bubeck_convex_optimization, hazan2022introduction}

  \begin{lem}[First order condition of convexity {{\cite[\S 3.1.3]{boyd2004convex}}}]
\label{focc_property}
Let $f:\R^n \rightarrow \R$ be a differentiable convex function. Then $\forall \vx, \vy \in \R^N$, 
$$ f(\vy) \geq f(\vx) + \nabla f(\vx)^\T(\vy - \vx)$$
\end{lem}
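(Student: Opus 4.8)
The plan is to derive the inequality directly from the definition of convexity (Definition \ref{def:convex}) by forming a one-sided difference quotient and passing to the limit, where differentiability lets me identify the limit as a directional derivative. Throughout I fix arbitrary $\vx, \vy \in \R^N$; the case $\vy = \vx$ is trivial, so assume $\vy \neq \vx$.

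First I would apply convexity to the point $(1-\lambda)\vx + \lambda\vy = \vx + \lambda(\vy - \vx)$ for $\lambda \in (0,1]$. Definition \ref{def:convex} gives
\begin{equation*}
f(\vx + \lambda(\vy - \vx)) \leq (1-\lambda)f(\vx) + \lambda f(\vy).
\end{equation*}
Subtracting $f(\vx)$ from both sides and dividing by $\lambda > 0$ yields the key difference-quotient bound
\begin{equation*}
\frac{f(\vx + \lambda(\vy - \vx)) - f(\vx)}{\lambda} \leq f(\vy) - f(\vx),
\end{equation*}
valid for every $\lambda \in (0,1]$.

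The final step is to take $\lambda \to 0^+$. Since $f$ is differentiable, the function $g(\lambda) := f(\vx + \lambda(\vy - \vx))$ is differentiable at $\lambda = 0$ with $g'(0) = \nabla f(\vx)^\T(\vy - \vx)$ by the chain rule, so the left-hand side above converges to $\nabla f(\vx)^\T(\vy - \vx)$. Because the inequality holds for all small positive $\lambda$, it is preserved in the limit, giving $\nabla f(\vx)^\T(\vy - \vx) \leq f(\vy) - f(\vx)$, which rearranges to the claimed bound $f(\vy) \geq f(\vx) + \nabla f(\vx)^\T(\vy - \vx)$.

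I expect the only real subtlety to be the identification of the difference-quotient limit with the directional derivative $\nabla f(\vx)^\T(\vy - \vx)$; this is where differentiability is essential (mere existence of partial derivatives would not suffice), and it is handled cleanly by viewing the quotient as the forward difference quotient of the scalar function $g$ at the origin. Everything else is elementary algebraic rearrangement of the convexity inequality together with the fact that weak inequalities pass to limits.
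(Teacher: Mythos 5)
Your proof is correct, and it is essentially the standard argument the paper defers to: the paper states this lemma without proof, citing Boyd \& Vandenberghe \S 3.1.3, and your line-restriction/difference-quotient argument is precisely the proof given there. The handling of the limit (differentiability identifying the difference quotient with $\nabla f(\vx)^\T(\vy-\vx)$, and weak inequalities passing to the limit) is sound, so nothing is missing.
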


\begin{lem}[Basic property of smooth convex functions]
\label{lem:smooth}
    A continuously differentiable convex function $f:\R^N \rightarrow \R$ is L-smooth if and only if $\forall \vx, \vy \in \R^N$, $$f(\vy) \leq f(\vx) + \nabla f(\vx)^\T(\vy - \vx) + \frac{L}{2}\norm{\vy-\vx}^2.$$
\end{lem}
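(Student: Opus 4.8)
The statement is an ``if and only if,'' so the plan is to prove the two implications separately, noting that only the backward direction genuinely uses convexity.

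First I would establish the forward direction: assuming $f$ is $L$-smooth, derive the quadratic upper bound. The idea is to write the difference $f(\vy) - f(\vx)$ as an integral of the gradient along the segment joining $\vx$ to $\vy$, namely $f(\vy) - f(\vx) = \int_0^1 \nabla f(\vx + s(\vy-\vx))^\T(\vy - \vx)\,ds$. Subtracting the linear term $\nabla f(\vx)^\T(\vy-\vx)$ and inserting $\nabla f(\vx)$ inside the integrand, I would bound the resulting integrand by Cauchy--Schwarz and then by the $L$-smoothness hypothesis $\norm{\nabla f(\vx + s(\vy-\vx)) - \nabla f(\vx)} \le Ls\,\norm{\vy-\vx}$. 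Integrating $Ls\,\norm{\vy-\vx}^2$ over $s\in[0,1]$ yields the factor $L/2$ and the desired bound. This direction needs only differentiability and Lipschitz continuity of the gradient; convexity plays no role here.

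The backward direction is the main work and is where convexity enters. Assuming the quadratic upper bound holds for all $\vx,\vy$, I want to recover $\norm{\nabla f(\vx) - \nabla f(\vy)} \le L\,\norm{\vx - \vy}$. The key device is to fix $\vx$ and consider the auxiliary function $\phi(\vz) := f(\vz) - \nabla f(\vx)^\T\vz$, which is convex (a convex function minus a linear one) and satisfies $\nabla\phi(\vz) = \nabla f(\vz) - \nabla f(\vx)$, so that $\nabla\phi(\vx)=\vzero$ and hence, by the first-order condition of convexity (Lemma \ref{focc_property}), $\vz = \vx$ is a global minimizer of $\phi$. The quadratic upper bound is inherited by $\phi$, and evaluating it at the gradient-step point $\vz = \vy - \tfrac{1}{L}\nabla\phi(\vy)$ gives $\phi(\vx) \le \phi(\vy) - \tfrac{1}{2L}\norm{\nabla f(\vy) - \nabla f(\vx)}^2$. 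Unwinding the definition of $\phi$ turns this into $\tfrac{1}{2L}\norm{\nabla f(\vy) - \nabla f(\vx)}^2 \le f(\vy) - f(\vx) - \nabla f(\vx)^\T(\vy - \vx)$. Adding this inequality to its copy with $\vx$ and $\vy$ interchanged produces the co-coercivity estimate $\tfrac{1}{L}\norm{\nabla f(\vx) - \nabla f(\vy)}^2 \le (\nabla f(\vx) - \nabla f(\vy))^\T(\vx - \vy)$, after which one final application of Cauchy--Schwarz and division by $\norm{\nabla f(\vx) - \nabla f(\vy)}$ (the case where this vanishes being trivial) delivers the Lipschitz bound.

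The main obstacle is the backward direction, and specifically the step of evaluating the quadratic bound at the minimizing point $\vy - \tfrac{1}{L}\nabla\phi(\vy)$: recognizing that this particular choice extracts the sharp $\tfrac{1}{2L}\norm{\cdot}^2$ term, and that the global-minimizer property of $\phi$ (which is exactly where convexity is used) licenses the inequality $\phi(\vx) \le \phi(\vy - \tfrac{1}{L}\nabla\phi(\vy))$, is the crux. Everything after that---the symmetrization to obtain co-coercivity and the closing Cauchy--Schwarz step---is routine.
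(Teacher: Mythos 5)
Your proof is correct. Note that the paper itself does not prove this lemma: it appears in the appendix under the blanket remark that these technical facts can be found in standard convex-optimization texts (Boyd--Vandenberghe, Bubeck, Hazan), and your argument is precisely the standard one from those references --- the integral representation $f(\vy)-f(\vx)=\int_0^1 \nabla f(\vx+s(\vy-\vx))^\T(\vy-\vx)\,ds$ plus Cauchy--Schwarz for the forward direction, and for the converse the auxiliary function $\phi(\vz)=f(\vz)-\nabla f(\vx)^\T\vz$ minimized at $\vx$, evaluated at the gradient step $\vy-\tfrac{1}{L}\nabla\phi(\vy)$, followed by symmetrization and one more Cauchy--Schwarz. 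The symmetrized inequality you obtain is exactly the paper's co-coercivity statement, Lemma \ref{lem:coercivity}, so your route also re-derives that lemma along the way. You are right to emphasize that convexity enters only in the backward direction, through the global-minimizer property licensed by Lemma \ref{focc_property}; without it the implication genuinely fails (e.g.\ $f(x)=-x^2$ satisfies the quadratic upper bound with $L=1$, yet its gradient is not $1$-Lipschitz). The one step you leave implicit --- that $\phi$ inherits the quadratic upper bound from $f$ --- is a one-line check, since subtracting a linear function shifts the gradient but leaves the second-order term untouched.
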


\begin{lem}[Basic property of strongly convex functions]
\label{lem:strong}
    If a continuously differentiable convex function $f:\R^N \rightarrow \R$ is $\sigma$-strongly convex , then  $\forall \vx, \vy \in \R^N$ 
    $$(  \nabla f(\vx)- \nabla f(\vy))^\T (\vx-\vy) \geq  \sigma \norm{\vx-\vy}^2.$$
\end{lem}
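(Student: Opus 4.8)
The plan is to obtain the stated monotonicity of the gradient directly from the defining inequality of $\sigma$-strong convexity (Definition \ref{strongly_convex_defi}) by applying it symmetrically in its two arguments and summing the results. First I would write down the strong convexity inequality exactly as given,
\[
f(\vy) \geq f(\vx) + \nabla f(\vx)^\T(\vy - \vx) + \tfrac{\sigma}{2}\norm{\vy-\vx}^2,
\]
and then write the same inequality again with the roles of $\vx$ and $\vy$ interchanged,
\[
f(\vx) \geq f(\vy) + \nabla f(\vy)^\T(\vx - \vy) + \tfrac{\sigma}{2}\norm{\vx-\vy}^2.
\]

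Next I would add these two inequalities. The function values $f(\vx)$ and $f(\vy)$ appear on both sides and cancel, and since $\norm{\vy-\vx}^2 = \norm{\vx-\vy}^2$ the two quadratic terms combine into $\sigma\norm{\vx-\vy}^2$, leaving
\[
0 \geq \nabla f(\vx)^\T(\vy-\vx) + \nabla f(\vy)^\T(\vx-\vy) + \sigma\norm{\vx-\vy}^2.
\]
Finally I would collect the inner-product terms: using $\nabla f(\vx)^\T(\vy - \vx) = -\nabla f(\vx)^\T(\vx - \vy)$, the first two terms combine into $-(\nabla f(\vx) - \nabla f(\vy))^\T(\vx - \vy)$, and moving this to the left-hand side yields the claimed bound.

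There is no substantive obstacle here; this is a standard two-line argument, and the only thing requiring care is the sign bookkeeping when the arguments are swapped, since the linear terms flip sign while the quadratic term does not. I would also note that the hypotheses of convexity and continuous differentiability serve only to guarantee that $\nabla f$ exists so that the strong convexity inequality is meaningful; the estimate itself follows entirely from the $\sigma$-strong convexity inequality, with the familiar monotonicity of the gradient recovered in the limiting case $\sigma = 0$.
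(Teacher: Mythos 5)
Your proof is correct, and it is the standard symmetrization argument: apply Definition \ref{strongly_convex_defi} at $(\vx,\vy)$ and at $(\vy,\vx)$, add, cancel the function values, and collect the linear terms into $(\nabla f(\vx)-\nabla f(\vy))^\T(\vx-\vy)$. The paper itself offers no proof of Lemma \ref{lem:strong} --- it is listed among technical facts cited to standard convex optimization texts --- and your argument is exactly the one those references give, so there is nothing to reconcile.
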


\begin{lem}[co-coercivity of smooth convex functions]
\label{lem:coercivity}
    If a continuously differentiable convex function $f:\R^N \rightarrow \R$ is L-smooth , then  $\forall \vx, \vy \in \R^N$ 
    $$(  \nabla f(\vx)- \nabla f(\vy))^\T (\vx-\vy) \geq  \frac{1}{L}\norm{ \nabla f(\vx)- \nabla f(\vy)}^2.$$
\end{lem}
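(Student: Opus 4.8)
The plan is to reduce the claim to the standard ``one gradient step improves the objective'' estimate for $L$-smooth convex functions and then symmetrize in $\vx$ and $\vy$. First I would fix $\vy$ and introduce the auxiliary function $g(\vz) := f(\vz) - \nabla f(\vy)^\T \vz$. Because $g$ differs from $f$ only by a linear term, it is again convex and $L$-smooth, its gradient is $\nabla g(\vz) = \nabla f(\vz) - \nabla f(\vy)$ (same Lipschitz constant as $\nabla f$), and crucially $\nabla g(\vy) = \vzero$, so by convexity $\vy$ is a global minimizer of $g$.

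Next I would apply the quadratic upper bound of Lemma \ref{lem:smooth} to $g$ with base point $\vx$ and evaluation point $\vx - \tfrac{1}{L}\nabla g(\vx)$, which yields
$$g\bigl(\vx - \tfrac{1}{L}\nabla g(\vx)\bigr) \leq g(\vx) - \tfrac{1}{2L}\norm{\nabla g(\vx)}^2.$$
Since $\vy$ minimizes $g$ globally, the left-hand side is bounded below by $g(\vy)$, giving $g(\vy) \leq g(\vx) - \tfrac{1}{2L}\norm{\nabla g(\vx)}^2$. Unwinding the definition of $g$, the linear terms recombine as $\nabla f(\vy)^\T(\vx-\vy)$ and $\norm{\nabla g(\vx)}^2 = \norm{\nabla f(\vx)-\nabla f(\vy)}^2$, so I obtain the one-sided estimate
$$f(\vx) - f(\vy) - \nabla f(\vy)^\T(\vx - \vy) \geq \tfrac{1}{2L}\norm{\nabla f(\vx) - \nabla f(\vy)}^2.$$

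Finally I would write the same inequality with the roles of $\vx$ and $\vy$ interchanged and add the two. The function-value differences $f(\vx)-f(\vy)$ cancel, the two inner-product terms combine into $(\nabla f(\vx) - \nabla f(\vy))^\T(\vx-\vy)$, and the right-hand sides add to $\tfrac{1}{L}\norm{\nabla f(\vx) - \nabla f(\vy)}^2$, which is exactly the asserted co-coercivity bound.

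The main obstacle is the middle step: the key insight is that one must \emph{not} apply Lemma \ref{lem:smooth} directly between $\vx$ and $\vy$, but instead build the shifted function $g$ whose minimizer $\vy$ is known a priori, and then feed the minimized quadratic upper bound into that known minimality. Once the improvement estimate is established, the remaining symmetrization and cancellation of linear terms is purely routine bookkeeping.
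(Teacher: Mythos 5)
Your proof is correct. The paper itself does not prove this lemma---it is stated as a standard fact with citations to standard convex-optimization texts---and your argument is exactly the standard proof found in those references: build the shifted function $g$ whose known minimizer is $\vy$, apply the descent estimate to a gradient step of $g$, and symmetrize. Note also that the one-sided estimate you establish in your middle step is precisely the paper's Lemma \ref{lem:bubeck} (with the roles of $\vx$ and $\vy$ interchanged), so within this paper you could shorten the argument by citing that lemma directly and keeping only the final symmetrization step.
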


\begin{lem}[co-coercivity of smooth and strongly convex functions]
\label{lem:coercivity_strong}
    If a continuously differentiable convex function $f:\R^N \rightarrow \R$ is L-smooth and $\sigma$-strongly convex , then  $\forall \vx, \vy \in \R^N$ 
    $$( \nabla f(\vx)- \nabla f(\vy))^\T (\vx-\vy) \geq  \frac{\sigma L}{\sigma+L} \norm{\vx-\vy}^2+ \frac{1}{\sigma+L}\norm{ \nabla f(\vx)- \nabla f(\vy)}^2.$$
\end{lem}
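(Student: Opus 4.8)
The plan is to reduce this to the plain co-coercivity bound of Lemma \ref{lem:coercivity} by subtracting off the strong-convexity modulus. I would introduce the auxiliary function $g(\vx):=f(\vx)-\tfrac{\sigma}{2}\norm{\vx}^2$, whose gradient is $\nabla g(\vx)=\nabla f(\vx)-\sigma\vx$. First I would verify that $g$ is convex and $(L-\sigma)$-smooth using only the characterizations already available. A one-line computation gives the identity $g(\vy)-g(\vx)-\nabla g(\vx)^\T(\vy-\vx)=f(\vy)-f(\vx)-\nabla f(\vx)^\T(\vy-\vx)-\tfrac{\sigma}{2}\norm{\vy-\vx}^2$. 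The right-hand side is nonnegative by $\sigma$-strong convexity (Definition \ref{strongly_convex_defi}), so $g$ is convex; and it is at most $\tfrac{L-\sigma}{2}\norm{\vy-\vx}^2$ by the $L$-smoothness upper bound of Lemma \ref{lem:smooth}, so the ``if'' direction of that same lemma certifies that $g$ is $(L-\sigma)$-smooth.

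Next I would apply Lemma \ref{lem:coercivity} to $g$, obtaining $(\nabla g(\vx)-\nabla g(\vy))^\T(\vx-\vy)\geq\tfrac{1}{L-\sigma}\norm{\nabla g(\vx)-\nabla g(\vy)}^2$, and substitute $\nabla g(\vx)-\nabla g(\vy)=(\nabla f(\vx)-\nabla f(\vy))-\sigma(\vx-\vy)$. Writing $D:=(\nabla f(\vx)-\nabla f(\vy))^\T(\vx-\vy)$, $G:=\norm{\nabla f(\vx)-\nabla f(\vy)}^2$, and $S:=\norm{\vx-\vy}^2$, the left side becomes $D-\sigma S$ and the right side becomes $\tfrac{1}{L-\sigma}(G-2\sigma D+\sigma^2 S)$. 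Clearing the denominator and collecting the $D$ terms yields $(L+\sigma)D\geq G+\sigma L\,S$, which is precisely the claimed inequality after dividing by $L+\sigma$. This portion is routine bookkeeping.

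The only point requiring care is the degenerate case $\sigma=L$, where the factor $\tfrac{1}{L-\sigma}$ in the co-coercivity bound is undefined (note $\sigma\leq L$ always holds for an $L$-smooth, $\sigma$-strongly convex function). In that case $g=f-\tfrac{\sigma}{2}\norm{\cdot}^2$ is simultaneously convex and $0$-smooth, hence has constant gradient, forcing $\nabla f(\vx)-\nabla f(\vy)=\sigma(\vx-\vy)$; substituting this directly into the target inequality collapses both sides to $\sigma\norm{\vx-\vy}^2$, so it holds with equality. I would therefore run the main argument for $\sigma<L$ and dispatch $\sigma=L$ separately (it is equally recoverable as a limit). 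The main obstacle, such as it is, is simply spotting the shift by $\tfrac{\sigma}{2}\norm{\cdot}^2$ that converts the strongly convex, smooth problem into a plain smooth one to which Lemma \ref{lem:coercivity} applies; everything afterward is algebra.
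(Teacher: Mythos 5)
Your proof is correct, but note that the paper itself never proves this lemma: it appears in the appendix among technical lemmas that ``can be found in most standard texts on convex optimization,'' so there is no in-paper argument to compare against. Your route---shifting by $\tfrac{\sigma}{2}\norm{\cdot}^2$ to form $g=f-\tfrac{\sigma}{2}\norm{\cdot}^2$, certifying that $g$ is convex and $(L-\sigma)$-smooth, applying the plain co-coercivity bound of Lemma \ref{lem:coercivity} to $g$, and expanding---is exactly the standard textbook derivation, and your bookkeeping checks out: clearing the denominator gives $(L+\sigma)D\geq G+\sigma L\,S$, which is the claim, and your separate treatment of the degenerate case $\sigma=L$ (where $\nabla g$ is constant and both sides collapse to $\sigma\norm{\vx-\vy}^2$) is also right. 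The one cosmetic gap is the step ``$g$ is convex'': you infer it from the first-order inequality $g(\vy)\geq g(\vx)+\nabla g(\vx)^\T(\vy-\vx)$, but the paper's Lemma \ref{focc_property} states only the forward direction (convexity implies the inequality). The converse is equally standard, but since you claim to use ``only the characterizations already available,'' you should either cite it explicitly or supply the one-line argument: for $\vz=\lambda\vx+(1-\lambda)\vy$, apply the inequality at $\vz$ against both $\vx$ and $\vy$ and take the convex combination to recover $g(\vz)\leq\lambda g(\vx)+(1-\lambda)g(\vy)$.
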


\begin{lem}[distance to minimizer for smooth convex functions]
\label{lem:dist_min}
    If a continuously differentiable convex function $f:\R^N \rightarrow \R$ is L-smooth , then  $\forall \vx, \vy \in \R^N$ 
    $$\frac{1}{2L} \norm{\nabla f(\vx)}^2 \leq  f(\vx)- f(\vx^*) \leq \frac{L}{2} \norm{\vx - \vx^*}^2.$$
\end{lem}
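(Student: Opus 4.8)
The plan is to derive both inequalities directly from the descent lemma for smooth convex functions (Lemma~\ref{lem:smooth}) together with the first-order optimality condition $\nabla f(\vx^*)=\vzero$. The right-hand inequality is immediate: applying Lemma~\ref{lem:smooth} with base point $\vx^*$ and evaluation point $\vx$ gives $f(\vx)\leq f(\vx^*)+\nabla f(\vx^*)^\T(\vx-\vx^*)+\tfrac{L}{2}\norm{\vx-\vx^*}^2$, and since $\vx^*$ is a minimizer the linear term vanishes, leaving $f(\vx)-f(\vx^*)\leq \tfrac{L}{2}\norm{\vx-\vx^*}^2$.

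For the left-hand inequality I would use the standard ``one gradient step'' argument. The key is to apply the descent lemma at the specific look-ahead point $\vy=\vx-\tfrac{1}{L}\nabla f(\vx)$, i.e. the point obtained by taking a single gradient step of size $1/L$ from $\vx$. Substituting $\vy-\vx=-\tfrac{1}{L}\nabla f(\vx)$ into Lemma~\ref{lem:smooth} collapses the right side to $f(\vx)-\tfrac{1}{L}\norm{\nabla f(\vx)}^2+\tfrac{1}{2L}\norm{\nabla f(\vx)}^2=f(\vx)-\tfrac{1}{2L}\norm{\nabla f(\vx)}^2$, so that $f(\vy)\leq f(\vx)-\tfrac{1}{2L}\norm{\nabla f(\vx)}^2$. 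Because $\vx^*$ is the global minimizer we have $f(\vx^*)\leq f(\vy)$; chaining these two bounds and rearranging yields $\tfrac{1}{2L}\norm{\nabla f(\vx)}^2\leq f(\vx)-f(\vx^*)$.

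The only subtlety—hardly an obstacle—is recognizing that the correct evaluation point for the lower bound is the minimizer of the quadratic upper bound furnished by Lemma~\ref{lem:smooth}, namely the gradient step $\vx-\tfrac{1}{L}\nabla f(\vx)$; any other step size fails to produce the tight constant $1/(2L)$. Everything else is routine algebra, and no convexity is invoked beyond what is already packaged into Lemma~\ref{lem:smooth}, so the whole argument reduces to two applications of that single inequality.
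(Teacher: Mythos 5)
Your proof is correct. The paper does not actually prove Lemma~\ref{lem:dist_min}---it is listed among the technical lemmas cited to standard convex-optimization texts---and your argument (the descent lemma of Lemma~\ref{lem:smooth} applied at base point $\vx^*$ with $\nabla f(\vx^*)=\vzero$ for the upper bound, and applied at the gradient step $\vy=\vx-\tfrac{1}{L}\nabla f(\vx)$ combined with $f(\vx^*)\leq f(\vy)$ for the lower bound) is precisely the standard textbook proof those citations point to.
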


\begin{lem}[distance to minimizer for strongly convex functions]
\label{lem:dist_min_strong}
    If a continuously differentiable  function $f:\R^N \rightarrow \R$  is $\sigma$-strongly convex, then  $\forall \vx, \vy \in \R^N$ $$\frac{1}{2 \sigma} \norm{\nabla f(\vx)}^2  \geq f(\vx)- f(\vx^*) \geq \frac{\sigma}{2} \norm{\vx - \vx^*}^2.$$
\end{lem}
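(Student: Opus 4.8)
The plan is to derive both inequalities directly from the defining strong-convexity inequality (Definition \ref{strongly_convex_defi}), using only that $\vx^*$ minimizes the differentiable function $f$ and therefore satisfies the first-order stationarity condition $\nabla f(\vx^*) = \vzero$. Existence of such an $\vx^*$ is supplied by Assumption \ref{assump:2}, and differentiability supplies the stationarity condition.

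For the lower bound $f(\vx) - f(\vx^*) \geq \tfrac{\sigma}{2}\norm{\vx - \vx^*}^2$, I would instantiate the strong-convexity inequality with the expansion point taken at $\vx^*$ and evaluated at $\vx$, giving
$$f(\vx) \geq f(\vx^*) + \nabla f(\vx^*)^\T(\vx - \vx^*) + \frac{\sigma}{2}\norm{\vx - \vx^*}^2.$$
Since $\nabla f(\vx^*) = \vzero$ the linear term drops out, and rearranging yields the claim immediately.

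For the upper bound $f(\vx) - f(\vx^*) \leq \tfrac{1}{2\sigma}\norm{\nabla f(\vx)}^2$, the idea is to treat the strong-convexity inequality as a global quadratic minorant of $f$: for every $\vy$,
$$f(\vy) \geq g(\vy) := f(\vx) + \nabla f(\vx)^\T(\vy - \vx) + \frac{\sigma}{2}\norm{\vy - \vx}^2.$$
The minorant $g$ is a strictly convex quadratic in $\vy$, stationary at $\vy = \vx - \tfrac{1}{\sigma}\nabla f(\vx)$; substituting this point gives $\min_\vy g(\vy) = f(\vx) - \tfrac{1}{2\sigma}\norm{\nabla f(\vx)}^2$. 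Evaluating the minorant inequality at $\vy = \vx^*$ and then bounding below by the minimum of $g$ gives $f(\vx^*) \geq g(\vx^*) \geq \min_\vy g(\vy) = f(\vx) - \tfrac{1}{2\sigma}\norm{\nabla f(\vx)}^2$, which rearranges to the desired bound.

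This is a standard result, so I do not anticipate a genuine obstacle; the only step warranting care is the chain $f(\vx^*) \geq g(\vx^*) \geq \min_\vy g(\vy)$, where the first inequality is the pointwise minorant property and the second is immediate from the definition of the minimum. I would also note that the argument uses only strong convexity and differentiability — neither $L$-smoothness nor any convexity beyond what strong convexity already entails is required — which matches the hypotheses stated in the lemma.
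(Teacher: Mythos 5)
Your proof is correct; note that the paper never actually proves this lemma---it is one of the technical results in the appendix explicitly delegated to standard convex-optimization texts---and your argument (instantiating the strong-convexity inequality at $\vx^*$ with $\nabla f(\vx^*)=\vzero$ for the lower bound, and minimizing the quadratic minorant $g(\vy)$ over $\vy$ for the upper bound) is precisely the textbook proof those references give. One small refinement: you need not borrow existence of $\vx^*$ from Assumption \ref{assump:2}, since strong convexity with $\sigma>0$ already makes $f$ coercive and hence guarantees a unique minimizer, which keeps the lemma self-contained as a statement about an arbitrary strongly convex $f$.
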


\begin{lem}[{\cite[Lemma 3.5]{bubeck_convex_optimization}}]
    \label{lem:bubeck}%
    Let $f_t$ be a $L$-smooth convex function. Then $\forall \vx, \vy \in \R^N$, 
    $$f(\vx) - f(\vy) \leq \nabla f_t(\vx)^\T(\vx- \vy) - \frac{1}{2L} \norm{\nabla f(\vx) - \nabla f(\vy)}^2. $$
\end{lem}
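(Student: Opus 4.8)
The plan is to use the standard auxiliary-function (``centering'') argument that reduces the claim to the quadratic upper bound for smooth convex functions, i.e.\ Lemma \ref{lem:smooth}. The shape of the inequality suggests linearizing $f$ around the gradient at $\vx$: the right-hand side pairs a first-order term $\nabla f(\vx)^\T(\vx - \vy)$ with a gradient-difference penalty, which is exactly what one obtains by subtracting off the tangent at $\vx$ and exploiting that $\vx$ becomes a minimizer of the resulting function.

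Concretely, I would define the auxiliary function $g(\vz) := f(\vz) - \nabla f(\vx)^\T \vz$. Since $g$ differs from $f$ only by a linear term, $g$ is again convex and $L$-smooth, with $\nabla g(\vz) = \nabla f(\vz) - \nabla f(\vx)$. The key observation is that $\nabla g(\vx) = \vzero$, so by the first-order condition of convexity (Lemma \ref{focc_property}) the point $\vx$ is a global minimizer of $g$; in particular $g(\vx) \leq g(\vw)$ for every $\vw \in \R^N$. Next I would apply the smoothness upper bound of Lemma \ref{lem:smooth} to $g$ expanded around the base point $\vy$, giving $g(\vw) \leq g(\vy) + \nabla g(\vy)^\T(\vw - \vy) + \frac{L}{2}\norm{\vw - \vy}^2$ for all $\vw$. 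Chaining $g(\vx) \leq g(\vw)$ with this bound and then minimizing the right-hand side over $\vw$ (the scalar quadratic is minimized at $\vw = \vy - \frac{1}{L}\nabla g(\vy)$) yields $g(\vx) \leq g(\vy) - \frac{1}{2L}\norm{\nabla g(\vy)}^2$.

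Finally I would substitute the definitions back in: using $g(\vx) = f(\vx) - \nabla f(\vx)^\T \vx$, $g(\vy) = f(\vy) - \nabla f(\vx)^\T \vy$, and $\nabla g(\vy) = \nabla f(\vy) - \nabla f(\vx)$, the inequality rearranges directly into the claimed bound, since the linear terms combine to $\nabla f(\vx)^\T(\vx - \vy)$ and the penalty becomes $\frac{1}{2L}\norm{\nabla f(\vx) - \nabla f(\vy)}^2$. There is no genuine obstacle here; the only steps needing care are justifying that the stationary point $\vx$ is a global minimizer of the convex function $g$ and carrying out the one-variable minimization of the quadratic upper bound. The real content lies entirely in choosing the correct linearization (centering at $\vx$ rather than $\vy$) so that the cross term and the gradient-difference term emerge together.
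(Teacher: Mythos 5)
Your proof is correct, and it is essentially the argument the paper implicitly relies on: the paper does not prove Lemma \ref{lem:bubeck} itself but cites \cite[Lemma 3.5]{bubeck_convex_optimization}, whose proof is exactly your centering construction $g(\vz) = f(\vz) - \nabla f(\vx)^\T \vz$ followed by the one-step quadratic minimization. All the steps you flag (that $\vx$ globally minimizes the convex function $g$ via Lemma \ref{focc_property}, and that $g$ inherits $L$-smoothness since it differs from $f$ by a linear term) are sound.
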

\begin{cor} \label{cor:bubeck}
  Under the setting in Lemma \ref{lem:bubeck} with  $\vx = \vx_t$, $\vy = \vx^*$, and  $\nabla f_t(\vx^*) = 0$ the bound becomes
    $$ \nabla f_t(\vx_t)^\T(\vx^*- \vx_t) \leq  \left[ f_t(\vx^*) -f_t(\vx_t) \right]  - \frac{1}{2L} \norm{\nabla f_t(\vx_t)}^2. $$
\end{cor}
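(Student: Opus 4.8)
The plan is to obtain the claim by a direct specialization of Lemma~\ref{lem:bubeck} followed by elementary sign bookkeeping; no new analytic input is required. The corollary is really just a repackaging of the lemma into the exact form that gets consumed in the main proofs.

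First I would instantiate the inequality of Lemma~\ref{lem:bubeck} with the substitutions $\vx=\vx_t$ and $\vy=\vx^*$, giving $f_t(\vx_t)-f_t(\vx^*)\leq \nabla f_t(\vx_t)^\T(\vx_t-\vx^*)-\frac{1}{2L}\norm{\nabla f_t(\vx_t)-\nabla f_t(\vx^*)}^2$. I would then apply the hypothesis $\nabla f_t(\vx^*)=\vzero$ (which holds because $\vx^*$ is a minimizer, Assumption~\ref{assump:2}), so that the squared-norm term collapses to $\norm{\nabla f_t(\vx_t)}^2$. This already contains all the content of the statement; what remains is purely formal.

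Next I would rearrange to isolate the inner product appearing in the corollary. Using $\nabla f_t(\vx_t)^\T(\vx_t-\vx^*)=-\nabla f_t(\vx_t)^\T(\vx^*-\vx_t)$ and moving terms across the inequality so that $\nabla f_t(\vx_t)^\T(\vx^*-\vx_t)$ stands alone on the left, I arrive at $\nabla f_t(\vx_t)^\T(\vx^*-\vx_t)\leq [f_t(\vx^*)-f_t(\vx_t)]-\frac{1}{2L}\norm{\nabla f_t(\vx_t)}^2$, which is exactly the stated bound.

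There is essentially no obstacle here: the only thing to watch is the sign convention in the final rearrangement, since the claim is phrased in terms of $\vx^*-\vx_t$ rather than $\vx_t-\vx^*$. As a consistency check, note that $f_t(\vx^*)-f_t(\vx_t)\leq 0$ by optimality, so the right-hand side is manifestly nonpositive and compatible with the $-\frac{1}{2L}\norm{\nabla f_t(\vx_t)}^2$ term. The value of the corollary lies not in its difficulty but in furnishing precisely the form used in the proof of Proposition~\ref{prop:1} (the displayed bound immediately preceding Equation~\eqref{eq:thm1:2}); the $\lambda$-weighted variant needed for Theorem~\ref{thm:2} would be produced analogously from a correspondingly relaxed version of the lemma.
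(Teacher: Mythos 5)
Your proposal is correct and matches what the paper intends: the corollary is stated without an explicit proof precisely because it is the immediate specialization of Lemma~\ref{lem:bubeck} at $\vx=\vx_t$, $\vy=\vx^*$ with $\nabla f_t(\vx^*)=\vzero$, followed by the sign rearrangement you carry out. Your instantiation, collapse of the gradient-difference norm, and rearrangement to the $\vx^*-\vx_t$ form are all exactly right, so nothing further is needed.
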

\begin{lem} \label{lem:bubeck_lambda}
  Under the setting in Lemma \ref{lem:bubeck} with  $\vx = \vx_t$, $\vy = \vx^*$, and  $\nabla f_t(\vx^*) = 0$ and for any $\lambda\in[0, 1]$ the bound becomes 
    $$ \nabla f_t(\vx_t)^\T(\vx^*- \vx_t) \leq (1-\lambda)  \left[ f_t(\vx^*) -f_t(\vx_t) \right]  - \frac{1+\lambda}{2L} \norm{\nabla f_t(\vx_t)}^2. $$
\end{lem}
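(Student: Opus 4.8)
The plan is to derive the claim by interpolating between Corollary \ref{cor:bubeck} (which is exactly the $\lambda = 0$ instance of the statement) and the left-hand distance-to-minimizer bound of Lemma \ref{lem:dist_min}. First I would invoke Corollary \ref{cor:bubeck}, which under the present substitutions gives
\[
\nabla f_t(\vx_t)^\T(\vx^* - \vx_t) \leq \left[ f_t(\vx^*) - f_t(\vx_t) \right] - \frac{1}{2L}\norm{\nabla f_t(\vx_t)}^2.
\]
This already supplies the full baseline gradient-norm penalty $\frac{1}{2L}\norm{\nabla f_t(\vx_t)}^2$, so the remaining task is purely to convert a $\lambda$-fraction of the suboptimality gap into an additional $\frac{\lambda}{2L}\norm{\nabla f_t(\vx_t)}^2$ of penalty, thereby upgrading the coefficient from $1$ to $1+\lambda$ while downgrading the gap coefficient from $1$ to $1-\lambda$.

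Next I would split the suboptimality term as $f_t(\vx^*) - f_t(\vx_t) = (1-\lambda)[f_t(\vx^*) - f_t(\vx_t)] + \lambda[f_t(\vx^*) - f_t(\vx_t)]$ and apply the left inequality of Lemma \ref{lem:dist_min}, namely $\frac{1}{2L}\norm{\nabla f_t(\vx_t)}^2 \leq f_t(\vx_t) - f_t(\vx^*)$, which rearranges to $f_t(\vx^*) - f_t(\vx_t) \leq -\frac{1}{2L}\norm{\nabla f_t(\vx_t)}^2$. Since $\lambda \geq 0$, multiplying this last inequality by $\lambda$ preserves its direction and bounds the second piece of the split by $-\frac{\lambda}{2L}\norm{\nabla f_t(\vx_t)}^2$.

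Substituting this bound back into the Corollary \ref{cor:bubeck} inequality and collecting the two gradient-norm contributions, $-\frac{1}{2L} - \frac{\lambda}{2L} = -\frac{1+\lambda}{2L}$, then yields exactly the stated right-hand side. The hypothesis $\lambda \in [0,1]$ is what makes this chaining legitimate: $\lambda \geq 0$ guarantees the multiplication above is sign-preserving, and $\lambda \leq 1$ keeps the leading coefficient $(1-\lambda)$ on the (nonpositive) suboptimality gap nonnegative, so the resulting inequality is a genuine relaxation interpolating between the $\lambda=0$ case (Corollary \ref{cor:bubeck}) and the $\lambda=1$ case, which reads $\nabla f_t(\vx_t)^\T(\vx^* - \vx_t) \leq -\frac{1}{L}\norm{\nabla f_t(\vx_t)}^2$.

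There is no real analytic obstacle here---the argument is a single convex-combination trick feeding one auxiliary bound into another---so the only thing demanding care is the bookkeeping of signs. Because the gap $f_t(\vx^*) - f_t(\vx_t)$ is nonpositive, every manipulation must respect that orientation, and the factor multiplying it must remain nonnegative for the substitution to go through. I expect that tracking these signs, rather than any genuine difficulty, constitutes essentially the entire content of the proof.
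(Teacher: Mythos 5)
Your proposal is correct and is essentially identical to the paper's own (one-line) proof, which likewise splits the suboptimality gap by adding and subtracting $\lambda\left[ f_t(\vx^*) - f_t(\vx_t)\right]$ and then applies Lemma \ref{lem:dist_min} to the $\lambda$-fraction. The sign bookkeeping you emphasize is exactly the content the paper leaves implicit.
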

\begin{proof}
    Add and subtract $\lambda  \left[ f_t(\vx^*) -f_t(\vx_t) \right]$ and apply Lemma \ref{lem:dist_min}.
\end{proof}

\clearpage
\section{Experimental Details}\label{app:exp}
For the optimization function $f_t$ in \eqref{eq:cost} it is useful to note that the second derivative of  
\begin{equation*}
 \frac{\partial^2}{\partial x}f_t(x)=b_t^2 \sech^2(b_t (-c_t + x))\leq b_t^2
\end{equation*}
and thus $f_t$ is an $L_t$-smooth function where $L_t=b_t^2$. In all simulations the HT parameters are chosen as $\gamma_t=1.5,\ \mu_t=1,\ \beta_t=1/1.5$ coinciding with Corollary \ref{cor:1}.

\subsection{Details for Figure \ref{fig:cor1}}
The algorithms are optimizing Equation \eqref{eq:cost} with 
\begin{itemize}
  \item $a_t=5$ 
  \item $b_t=7$
  \item $c_t=0$ for $t < 50$ and then $c_t=5$ for $t\geq 50$
\end{itemize}
For GD, NAGD, and HT the learning rate normalizer is $N_t=N=49$. All methods are initialized with $x_0=y_0=z_0=5$

\subsection{Details for Figure \ref{fig:2}}
The algorithms are optimizing Equation \eqref{eq:cost} with 
\begin{itemize}
  \item $a_t=5$ 
  \item $b_t=7$ for $ t < 50$ and then $b_t=21$ for $t\geq 50$
  \item $c_t=0$
\end{itemize}
For GD, NAGD, the learning rate normalizer is $N=49$ throughout. For TN-GD, TN-SAGD and HT $N_t=49$ for $t<50$ and then $N_t=441$ for $t\geq 50$. All methods are initialized with $x_0=y_0=z_0=7$

\subsection{Details for Figure \ref{fig:3}}
The algorithms are optimizing Equation \eqref{eq:cost} with 
\begin{itemize}
  \item $a_t=5$ 
  \item $b_t=7$
  \item $c_t=0$ for $t < 50$, $c_t=5$ for $ 50 \leq t<150$, $c_t=-4$ for $150 \leq t<300$, and then $c_t=0$ for $t\geq300$
\end{itemize}
For HT the learning rate normalizer is $N_t=49$. For Adam and Adagrad the learning rate $\alpha$ was manually tuned to have the best possible performance and was set to $\alpha=1$.

\end{document}